\documentclass[lettersize,journal]{IEEEtran}
\usepackage{amsmath,amsfonts}
\usepackage{algorithmic}
\usepackage{algorithm}
\usepackage{array}
\usepackage[caption=false,font=normalsize,labelfont=sf,textfont=sf]{subfig}
\usepackage{textcomp}
\usepackage{stfloats}
\usepackage{url}
\usepackage{verbatim}
\usepackage{graphicx}
\usepackage{cite}
\usepackage{amsthm}
\usepackage{booktabs}
\usepackage{amssymb}   % for \checkmark
\usepackage{pifont}
\usepackage[export]{adjustbox}
\usepackage{subcaption}
\usepackage{booktabs}
\usepackage{multirow}
\usepackage{thmtools}
\usepackage{thm-restate}
\usepackage{subfig}
\usepackage{amsmath}
\usepackage{array}

\begin{document}

\title{Causality-Driven Disentangled Representation Learning \\ in Multiplex Graphs}

\author{Saba Nasiri, Selin Aviyente, Dorina Thanou 
        % <-this % stops a space
%\thanks{This paper was produced by the IEEE Publication Technology Group. They are in Piscataway, NJ.}% <-this % stops a space
\thanks{S. Nasiri and D. Thanou are with EPFL, Lausanne, Switzerland
(e-mail: saba.nasiri@epfl.ch; dorina.thanou@epfl.ch). S. Aviyente is with the Department of Electrical and Computer Engineering at Michigan State University, East Lansing, MI (e-mail: aviyente@egr.msu.edu). This work was supported in part by  SNSF under the Lead Agency grant 229452 and NSF under ECCS-2430516. }

\thanks{Code is publicly available at: 
\protect\url{https://github.com/SabaNasirii/CaDeM}}

}
%,~\IEEEmembership{Staff,~IEEE,}

% The paper headers
\markboth{}%
{Shell \MakeLowercase{\textit{et al.}}: A Sample Article Using IEEEtran.cls for IEEE Journals}

\IEEEpubid{}
% Remember, if you use this you must call \IEEEpubidadjcol in the second
% column for its text to clear the IEEEpubid mark.

\maketitle

\begin{abstract}
Learning representations from multiplex graphs, i.e., multi-layer networks where nodes interact through multiple relation types, is challenging due to the entanglement of shared (common) and layer-specific (private) information, which limits generalization and interpretability. In this work, we introduce a causal inference–based framework that disentangles common and private components in a self-supervised manner. CaDeM jointly (i) aligns shared embeddings across layers, (ii) enforces private embeddings to capture layer-specific signals, and (iii) applies backdoor adjustment to ensure that the common embeddings capture only global information while being separated from the private representations. Experiments on synthetic and real-world datasets demonstrate consistent improvements over existing baselines, highlighting the effectiveness of our approach for robust and interpretable multiplex graph representation learning.
\end{abstract}

\begin{IEEEkeywords}
Multiplex Graphs, Disentangled Embeddings,
Backdoor Adjustment, Self-Supervised Learning 
\end{IEEEkeywords}

\section{Introduction}
\label{sec:intro}

Multiplex graphs are multi-layer networks where the same set of nodes is connected through different types of edges, each representing a distinct relation \cite{Kivela_2014}. 
This structure is particularly valuable in complex, multi-modal domains where data is naturally structured across multiple layers. For instance, brain activity can be modeled using multiple graph views, where one layer reflects structural connectivity, and another captures functional connectivity, providing complementary perspectives on brain function. Similarly, social networks contain different interaction layers such as ``follow'', ``like'', or messaging. Disentangling representations across these views facilitates the recovery of the underlying generative factors driving interactions, allowing models to distinguish global patterns from interaction-specific dynamics in applications such as recommendation and community detection.

However, learning effective representations in multiplex graphs remains challenging. In many settings, node behavior is governed by both common embeddings, which capture characteristics stable across layers, and private embeddings, which encode layer-specific signals.
Most existing methods entangle these components by projecting them into a single embedding space \cite{Xiaowen_Dong_2012}, \cite{Liu2017PrincipledMN}, \cite{DEFORD2019121949}. Such aggregation often obscures global patterns and corrupts layer-specific features, leading to reduced robustness and limited interpretability. Disentanglement addresses this issue by explicitly separating common and private embeddings so that each captures only the information relevant to its intended role. 
Yet, recent approaches in the context of multiplex graphs often rely on contrastive or orthogonality objectives as proxies for disentanglement \cite{NEURIPS2021_b6cda17a}, \cite{ZHANG2024110839}, \cite{pmlr-v202-mo23a}. While effective in practice, such objectives do not explicitly control for confounding effects introduced by shared signals that may spuriously correlate with layer-dependent targets. This provides no explicit mechanism to prevent layer-specific information from influencing the common branch or to prevent the private branches from relying on shared shortcuts.

We introduce \textbf{CaDeM}, a \textbf{Ca}usality-driven self-supervised framework for \textbf{D}isentangled representation learning in \textbf{M}ultiplex graphs. CaDeM builds on a Graph Convolutional Neural Network (GCN) backbone and combines three complementary objectives: (i) a matching objective that aligns common embeddings across layers, (ii) a self-supervised objective that drives private embeddings to capture meaningful layer-specific signals while avoiding trivial solutions, and (iii) a causal loss that approximates backdoor adjustment via confounder stratification and re-pairing of private and common embeddings. Together, these objectives enforce disentanglement by ensuring that common embeddings retain information shared across layers, while private embeddings capture layer-specific signals. 

Extensive experiments on both synthetic multiplex graphs and real-world datasets spanning citation networks, recommendation systems, and brain networks demonstrate the effectiveness of CaDeM. In particular, it consistently achieves the best Macro- and Micro-F1 scores across application domains on both node-level and graph-level tasks. 
Qualitative and quantitative results confirm that CaDeM disentangles shared and layer-specific components, yielding representations that capture interpretable factors aligned with their intended roles.
Overall, by reducing confounding, CaDeM produces cleaner representations, leading to improved performance in downstream tasks.

\section{Related work}
\label{sec:relwork}
In this section, we review three key strands of research that motivate our approach. 

\textbf{Self-supervised learning on graphs.} Self-supervised learning on graphs, using techniques such as node masking, edge prediction, or contrastive objectives, has become a dominant paradigm for single-layer graphs \cite{9770382}. Extending this idea to multiplex graphs, early methods often combined information from multiple layers in simple ways, for example, by concatenating per-layer embeddings or applying joint matrix factorization \cite{Xiaowen_Dong_2012}, \cite{Liu2017PrincipledMN}, \cite{DEFORD2019121949}. 
A recent study \cite{10.1007/978-3-031-63778-0_1} analyzes fusion strategies in multiplex graphs and introduces different levels of fusion, showing that very early (graph-level) or very late (prediction-level) fusion tends to discard important layer-specific or cross-layer information.
HDMI \cite{Jing_2021} is a self-supervised framework that maximizes mutual information to capture both node–global and node–attribute dependencies. An attention-based fusion module unifies per-layer embeddings, enabling HDMI to achieve strong performance on different tasks. DMGI \cite{park2020unsupervisedattributedmultiplexnetwork} extends DGI \cite{veličković2018deep} to multiplex graphs by learning relation-type–specific encoders and using a consensus regularization framework with a shared discriminator to maximize mutual information between local node embeddings and global graph summaries across relation types. More recently, Peng \textit{et al.} proposed CoCoMG \cite{Peng_Wang_Zhu_2023}, which encourages complementary and consistent information across graph layers using correlation-based objectives. It uses a local structure preservation loss and a CCA-based objective to enforce cross-layer consistency. HMGE \cite{abdous2024hierarchical} introduces a hierarchical aggregation strategy for high-dimensional multiplex graphs and trains the model through mutual information maximization between local patches and global summaries. However, these approaches do not explicitly disentangle shared and layer-specific components.

\textbf{Disentangled representation learning on graphs. } Parallel work emphasizes disentangled representations, particularly in self-supervised settings. DGCL applies contrastive objectives to separate latent dimensions in graph-level embeddings \cite{NEURIPS2021_b6cda17a},  while DEGREE disentangles shared and view-specific representations using orthogonality constraints to improve multi-view clustering  \cite{ZHANG2024110839}. More recently, DMGRL further extracts common and private components through separate graph neural paths and enforces desired properties using orthogonality, reconstruction, and contrastive constraints \cite{pmlr-v202-mo23a}. 
More recently, Hu \textit{et al.} \cite{hu2024disentangledgenerativegraphrepresentation} introduce DiGGR, a generative framework that learns disentangled latent factors to guide graph masking and reconstruction, improving both robustness and interpretability.
These studies highlight the importance of disentanglement, showing that enforcing factor independence leads to more generalizable embeddings.

\textbf{Causal inference in graph learning.}
Causal tools for robust graph representation learning have recently gained attention. Standard graph neural networks (GNNs) often conflate causally relevant features with confounders, leading to spurious correlations in downstream tasks. Recent work incorporates causal principles to mitigate this issue.  CAL \cite{Sui_2022} explicitly
parameterizes the backdoor adjustment formula to separate causal and shortcut features and enforces invariant relationships between causal patterns and model predictions, while Casper \cite{Jing_2024} extends causal attention to spatiotemporal GNNs via front-door adjustment. To further mitigate confounding effects, Gao \textit{et al.} \cite{10.1609/aaai.v37i6.25925} propose RCGRL, which actively generates instrumental variables to remove confounders within GNNs, enabling the model to learn representations that depend only on causal features. While promising, these methods do not consider multiplex structures, where shared and layer-specific information naturally coexist. Our work is, to the best of our knowledge, the first to operationalize causal adjustment for disentanglement in multiplex graphs by explicitly stratifying and re-pairing private (causal) and common (confounding) embeddings in different layers to isolate causal effects in downstream tasks.
This approach enables a clearer separation between common and private embeddings for their specific roles.

\section{{C\MakeLowercase{a}D\MakeLowercase{e}M}: Causality-Driven Disentanglement in Multiplex Graphs}
\label{sec:method}
Motivated by these limitations, we introduce \textbf{CaDeM}, a self-supervised framework for causality-driven disentangled representation learning in multiplex graphs. CaDeM learns common embeddings capturing information shared across layers and private embeddings encoding layer-specific features. The framework builds on a Graph Convolutional Network (GCN) backbone and integrates three complementary objectives: (i) aligning common embeddings across layers, (ii) encouraging private embeddings to capture layer-specific information, and (iii) incorporating a causality-inspired mechanism that ensures common embeddings remain disentangled from private representations.

\begin{figure}[!t]
  \centering

  \includegraphics[width=\columnwidth]{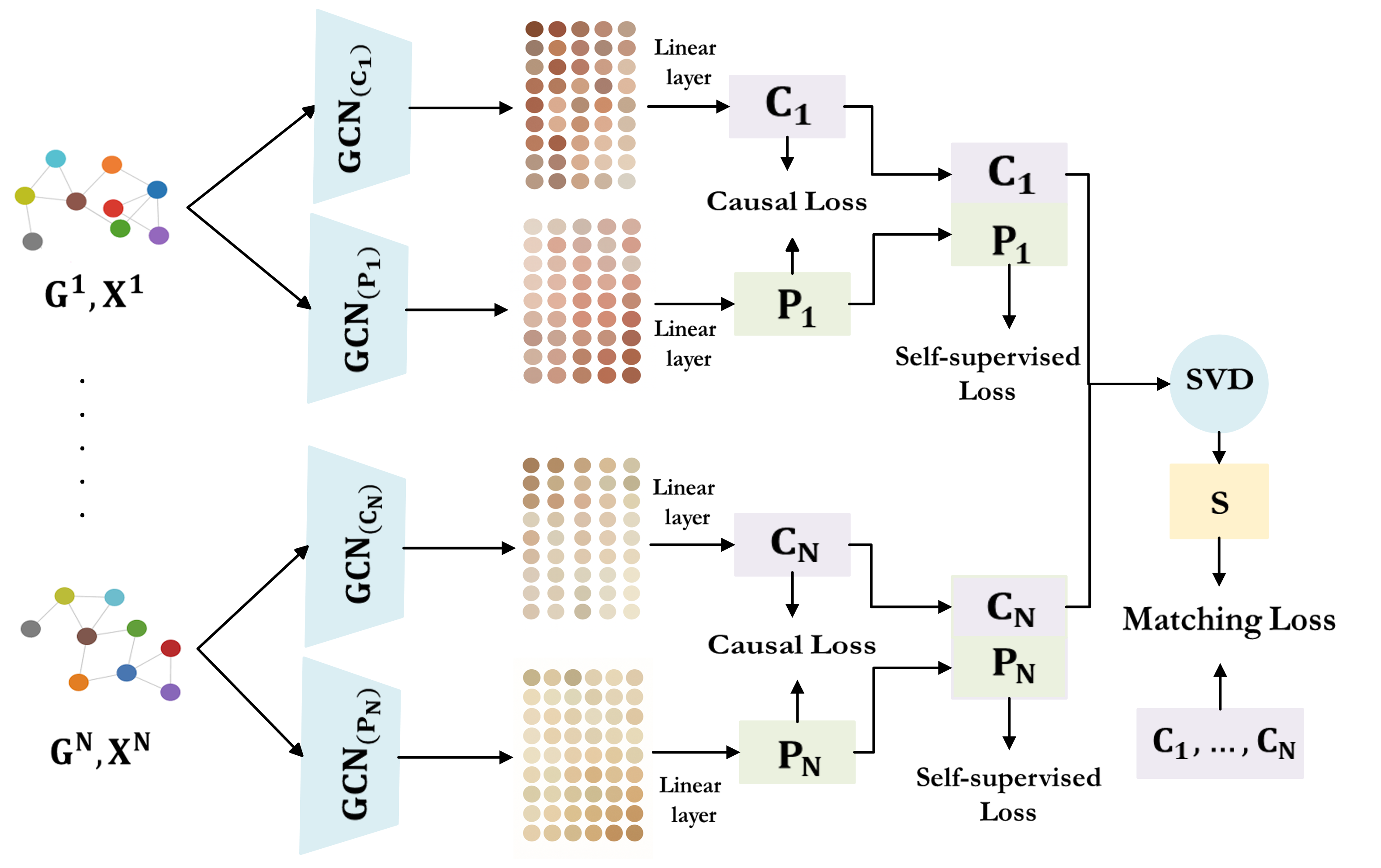}\\[-0.5ex]
  {\footnotesize (a) CaDeM End-to-End Pipeline Architecture}
  \vspace{0.3cm}

  \includegraphics[width=\columnwidth]{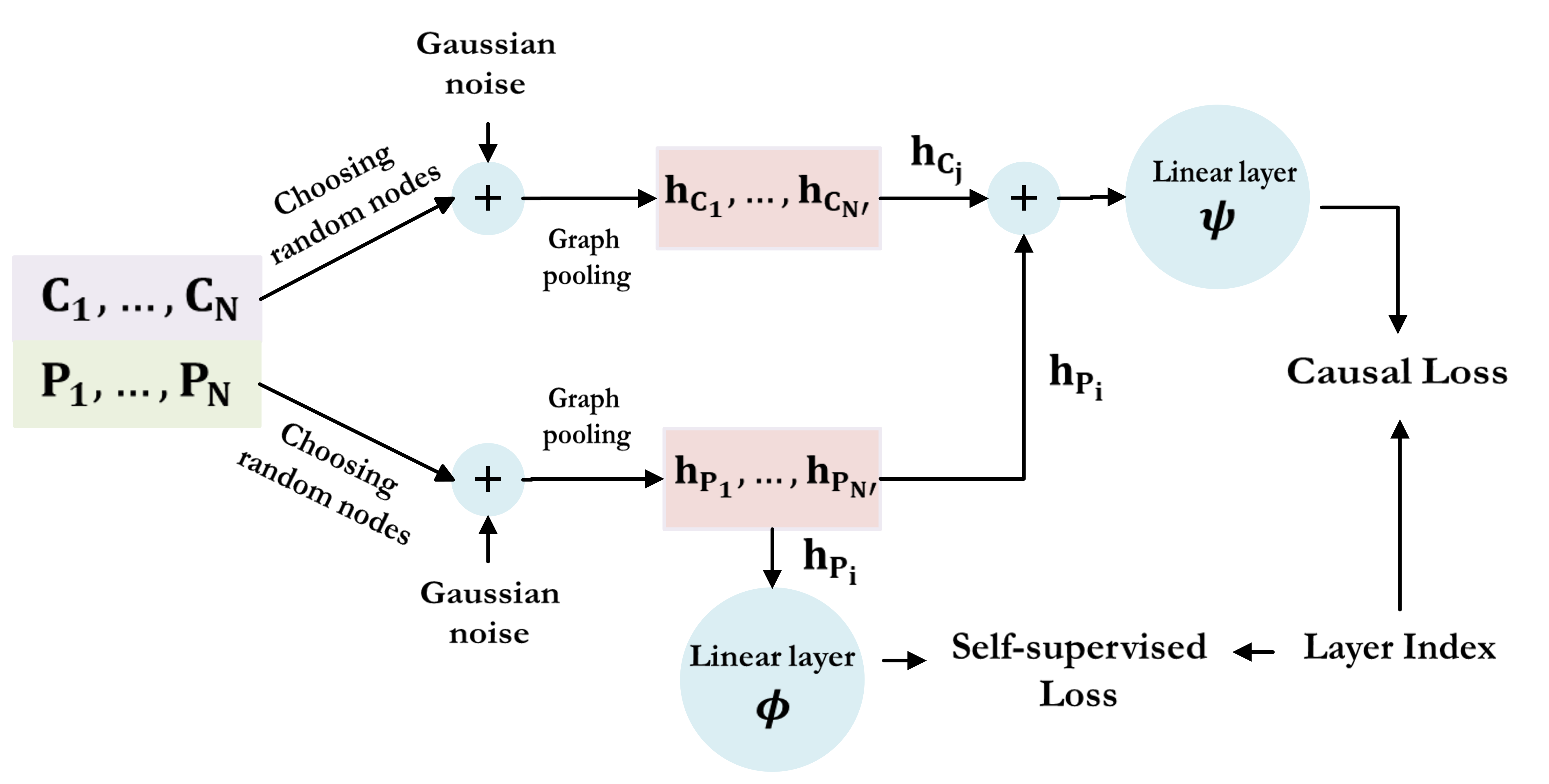}\\[-0.5ex]
  {\footnotesize (b) Network architecture for graph augmentation used to support self-supervised and causal losses in CaDeM.}
  \caption{CaDeM Network architecture for extracting common and private embeddings, with losses enforcing desired characteristics.}
  \label{fig:two_images}
\end{figure}

\subsection{ Problem formulation and framework overview}
We consider a multiplex graph $G = \{ G^{1}, \ldots, G^{N} \}$, 
where $N$ denotes the number of layers. Each layer  $G^{\ell} = (V, E^{\ell}, X^{\ell})$ shares a common node set $V = \{ v_{1}, \ldots, v_{M} \}$, with $M$ nodes, but has distinct edge set $E^{\ell}$ and corresponding node feature matrix $X^{\ell} \in \mathbb{R}^{M \times F}$, where $F$ is the input feature dimension. Given $G$, CaDeM learns two types of embeddings for each layer: common embeddings $C_{1}, \ldots,  C_{N} \in \mathbb{R}^{M \times d}$ which capture information shared across layers,  and private embeddings $P_{1}, \ldots, P_{N} \in \mathbb{R}^{M \times d}$, which encode layer-specific features.  
Here, $d$ denotes the embedding dimension. For the theoretical analysis that follows, we also introduce a probabilistic view over the layers. In particular, let
$\mathbf{L}$ denotes the discrete random variable representing the layer index, distributed uniformly over $\{1,\dots, N\}$. We denote by $\mathbf{P}_{\mathbf{L}}$ and $\mathbf{C}_{\mathbf{L}}$ the private and common embedding random variables corresponding to the layer indexed by the random variable $\mathbf{L}$. To obtain these representations, we employ two parallel GCN encoders per layer: one dedicated to learning common embeddings (e.g., $\text{GCN}_{(C_1)}, \ldots, \text{GCN}_{(C_N)}$) and the other to learning private embeddings (e.g., $\text{GCN}_{(P_1)}, \ldots, \text{GCN}_{(P_N)}$). Each GCN consists of learnable weight matrices that aggregate information from node neighbors and transform into latent representations \cite{kipf2017semisupervised}. The outputs of the encoders are further projected using linear layers into the final embedding spaces for $C_\ell$ and $P_\ell$. Training is guided by three objectives, i.e., matching loss, self-supervised loss, and causal loss, which jointly enforce alignment of common embeddings across layers, capture layer-specific signals through private embeddings, and disentangle common and private embeddings for their intended role, as described in the following subsections. 
Fig.~\ref{fig:two_images} illustrates the overall architecture.

\subsection{Aligning common embeddings}
The matching loss aligns common embeddings across layers by encouraging representations of the same node to map to a shared latent space. Intuitively, it promotes a
common component that captures invariant information across layers while preventing leakage of layer-specific signals into the shared embedding. To obtain such a unified representation, we aggregate the common embeddings across
layers as $C' = \sum_{\ell=1}^{N} C_\ell \in \mathbb{R}^{M \times d}$, 
and seek a matrix $S \in \mathbb{R}^{M \times d}$ that best represents the shared structure. This leads naturally to the framework of Generalized Procrustes Analysis (GPA) \cite{Gower_1975}, which finds a consensus configuration minimizing the total discrepancy between $S$ and the individual embeddings $C_\ell$:
\begin{equation}
\min_{S} \sum_{\ell=1}^{N} \|C_\ell - S\|_F^2 
\quad \text{s.t.} \quad S^{T} S = I. 
\end{equation}
The classical solution \cite{https://doi.org/10.1111/j.1745-3984.2003.tb01108.x} computes the singular value decomposition of the aggregated matrix,  $C' = U \Sigma V^{T}$, and defines the shared embedding as $S = U V^{T}$.
This yields the orthonormal configuration closest (in Frobenius norm) to the combined embeddings and serves as the shared embedding across layers. Following standard practices in embedding alignment, we additionally impose a zero-mean constraint on $S$ to eliminate global shifts and ensure identifiability. The matching loss is therefore defined as:
\begin{equation}
\begin{aligned}
\mathcal{L}_{\text{matching}}
&= \sum_{\ell=1}^{N} \| C_\ell - S \|_{F}^{2}, 
\hspace{4pt} \text{s.t.}\hspace{4pt} 
S^{\top} S = I \in \mathbb{R}^{d \times d},\hspace{3.5pt}
S^{\top} \mathbf{1} = \mathbf{0},
\end{aligned}
\label{eq:matching}
\end{equation}
 where $ \mathbf{1} \in \mathbb{R}^{M}, \mathbf{0} \in \mathbb{R}^{d}$. These constraints ensure that $S$ provides a unified representation that preserves shared information while aligning embeddings across layers.

\subsection{Causality-aware disentanglement of graph embeddings}
In graph representation learning, predictive features can be decomposed into causal components, which reflect stable, task-relevant mechanisms, and confounders, which induce spurious correlations by opening backdoor paths between the input graph and downstream tasks \cite{10.1609/aaai.v37i6.25925}. We generalize this perspective to multiplex graphs to guide the learning of private and common embeddings. 

\emph{Self-supervised formulation.} We define the downstream task in a self-supervised manner: each graph is assigned a layer-specific target capturing information unique to that layer, such as average degree, clustering coefficient, or other layer-specific properties. For simplicity and generality, we choose the layer index as the target, and train the model to predict it from the learned embeddings. In this formulation, private embeddings act as causal features because they encode the layer-dependent information necessary for prediction, while common embeddings act as confounders since they capture signals shared across layers that correlate with both the input graphs and the prediction target, potentially introducing spurious shortcuts. 

\emph{Backdoor adjustment.} The causal relationships among these components are illustrated in Fig. \ref{figdiagram}. This diagram reveals a backdoor path between the private embeddings $P_\ell$ and target prediction, i.e., $P_\ell \leftarrow \{G^\ell, X^\ell\} \rightarrow C_\ell \rightarrow G^\ell$ Embeddings $\rightarrow \text{target}$. Along this path, $C_\ell$ acts as a confounder between $P_\ell$ and the target prediction, creating a backdoor path that must be controlled during training.
\begin{figure}[!!!!!t]
  \centering
  \includegraphics[width=0.65\linewidth]{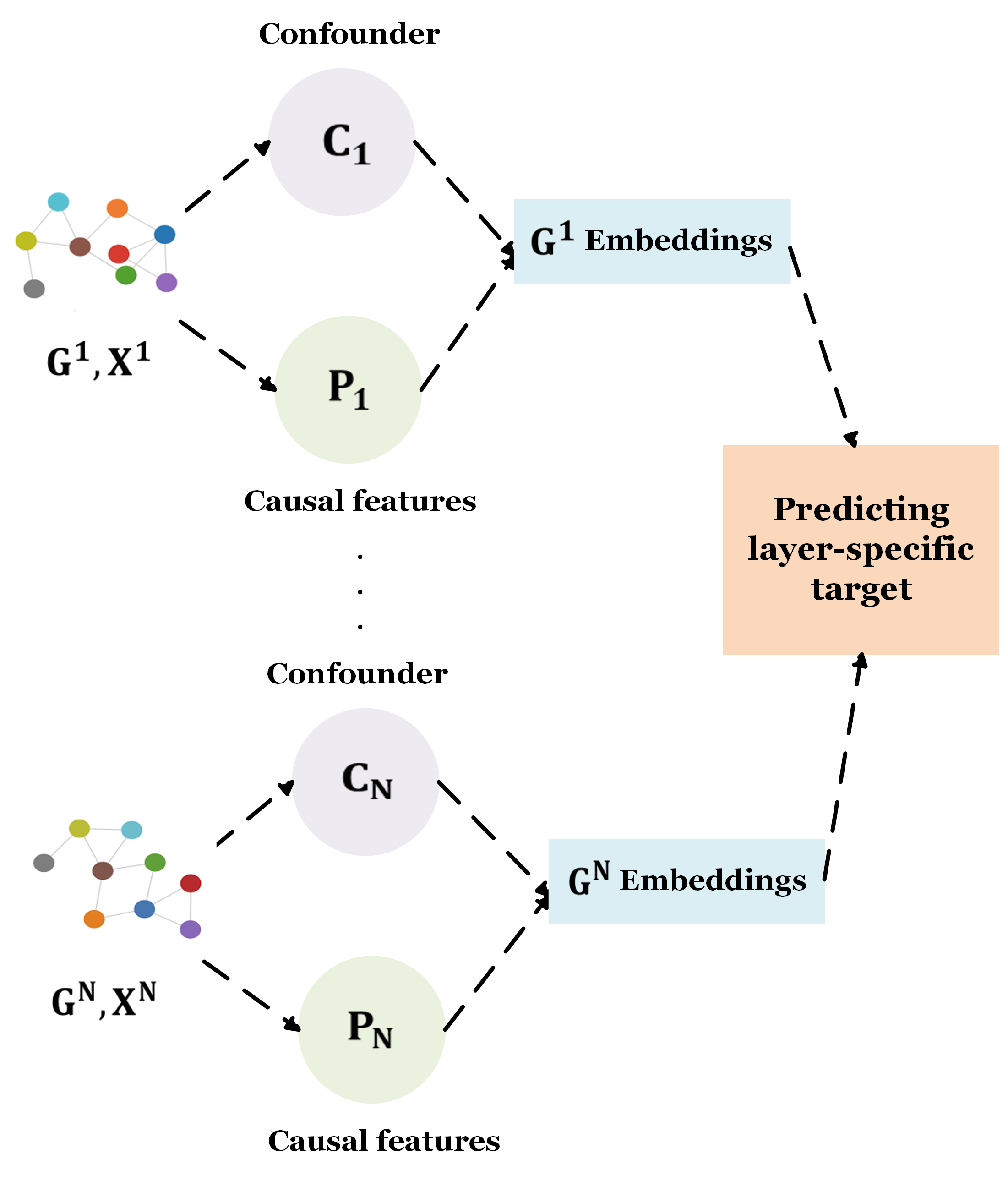}
  \caption{Causal diagram illustrating the roles of private and common embeddings in the layer-specific prediction task.}
  \label{figdiagram}
\end{figure}

To isolate the causal contribution of private embeddings (causal features) and mitigate the bias introduced by common embeddings (confounders), we adopt the backdoor adjustment principle from causal inference. Specifically, we block the backdoor path by conditioning on the common embeddings. Eq.~\eqref{backdoor} presents the backdoor adjustment formula adapted to our multiplex graph setting, where $\mathcal{C}$ denotes the set of common embeddings:

{\fontsize{8}{8}\selectfont
\begin{equation}
\begin{aligned}
&p(\text{task} \mid \mathrm{do}(\mathbf{P_L})) 
\overset{\text{Bayes Rule}}{=} \sum_{C_\ell \in \mathcal{C}} p(\text{task} \mid \mathrm{do}(\mathbf{P_L}), \mathbf{C_L} = C_\ell)\, p(\mathbf{C_L} = C_\ell)\\[4pt] 
&= \sum_{C_\ell \in \mathcal{C}} p(\text{task} \mid \mathbf{P_L}, \mathbf{C_L} = C_\ell)\, p(\mathbf{C_L} = C_\ell)
\quad \text{(Backdoor Criterion)}.
\end{aligned}
\label{backdoor}
\end{equation}
}
The backdoor criterion \cite{Pearl2014CausalMediation} states that if all variables that confound the relationship between a cause and its effect are properly conditioned on, the true causal effect can be identified. In our setting, this translates to estimating the effect of private embeddings on the downstream task while adjusting for the influence of common embeddings, which act as confounders. Direct implementation of backdoor adjustment on graphs is challenging
\cite{Sui_2022}. In particular, we approximate backdoor adjustment through a confounder stratification strategy. Specifically, each private embedding is re-paired with multiple realizations of common embeddings, enforcing prediction invariance across all pairings. This encourages the model to base predictions solely on private embeddings while neutralizing the spurious influence of common embeddings, thus disentangling causal and confounding signals and yielding more robust and interpretable embeddings.

\emph{Graph augmentation for stratification.} 
Since multiplex graphs often have only a small number of layers, we generate augmented graphs to obtain richer realizations of private and common components for stratification and more reliable loss estimation. Specifically, we create additional graphs by randomly sampling a ratio $r$ of the nodes from each layer, thereby augmenting the total number of graphs from $N$ to $N'$. For each augmented graph, we perturb node-level embeddings by injecting Gaussian noise with zero mean and a relatively small standard deviation $\sigma$ into both common and private embeddings. The parameters $r$, $\sigma$, and $N'$ are selected via hyperparameter tuning as described in Section \ref{eval_setup}. 

We define $\tau = \{ \tilde{G}_1, \ldots, \tilde{G}_{N'}\}$ as the set of augmented single-layer graphs. The layer-specific target $y_{\tilde{G}_i} \in \mathbb{R}^N$ is defined as the one-hot encoded layer index of graph $\tilde{G}_i \in \tau$, which is used to differentiate between graphs across layers. An add-pooling operation aggregates node-level embeddings into graph-level representations, denoted as $h_{C_i} \in \mathbb{R}^{d}$ for the common component and $h_{P_i} \in \mathbb{R}^{d}$ for the private component of the graph $\tilde{G}_i \in \tau$.
Predictions are then obtained through trainable networks $\phi$ and $\psi$, implemented as linear layers followed by a softmax activation, which operate on the private and stratified graph embeddings, respectively. Fig. \ref{fig:two_images}(b) illustrates the network architecture that separates causal (private) from confounding (common) embeddings within a causal inference framework, forming the basis for our self-supervised and causal losses.

\emph{Self-supervised and causal losses.} Formally, the self-supervised loss is formulated as a cross-entropy loss that trains private embeddings to predict the layer index:
\begin{equation}
\begin{aligned}
\mathcal{L}_{\text{self-supervised}} = -\frac{1}{N'}\sum_{\tilde{G}_i\in \tau} y_{\tilde{G}_i}^T \log \phi(h_{P_i}).
\end{aligned}
\label{eq2}
\end{equation}
Because private embeddings are designed to capture layer-specific characteristics, they should be predictive of the assigned targets. This loss serves as a regularizer, preventing private embeddings from collapsing into trivial or overly similar representations. Moreover, minimizing this objective maximizes the mutual information between private embeddings and the layer index, ensuring that the private embeddings retain the information required to predict layer-specific targets. This result is formalized in Proposition \ref{theorem1}.
\newtheorem{theorem}{Theorem}
\newtheorem{proposition}{Proposition}
\begin{proposition}
\label{theorem1}
Assuming the prediction head $\phi$ is sufficiently expressive to represent the Bayes posterior $p(\mathbf{L} \mid \mathbf{P_L})$, minimizing  $\mathcal{L}_{\text{self-supervised}}$ is equivalent to maximizing the mutual information $I(\mathbf{P_L}; \mathbf{L})$ between the private embeddings $\mathbf{P_L}$ and the layer index $\mathbf{L}$.
\end{proposition}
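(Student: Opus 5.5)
The plan is to rewrite the empirical cross-entropy in Eq.~\eqref{eq2} as a population quantity and split it into a conditional entropy plus a KL gap. Because $\mathbf{L}$ is uniform over $\{1,\dots,N\}$ and each augmented graph $\tilde{G}_i$ carries the one-hot target of its layer, $\mathcal{L}_{\text{self-supervised}}$ is the empirical estimate of
\[
\mathcal{R}(\phi,\text{encoder}) = \mathbb{E}_{(\mathbf{L},\mathbf{P_L})}\bigl[-\log q_\phi(\mathbf{L}\mid \mathbf{P_L})\bigr],
\]
where $q_\phi(\cdot\mid h)$ denotes the softmax output of $\phi$ on the pooled private embedding. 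I will treat the augmented set $\tau$ as i.i.d.\ samples from this joint law and read $\mathbf{P_L}$ as the pooled graph-level private representation $h_P$. This identification is definitional, but it must be stated explicitly.

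The main step is the standard decomposition
\[
\mathbb{E}\bigl[-\log q_\phi(\mathbf{L}\mid \mathbf{P_L})\bigr] = H(\mathbf{L}\mid \mathbf{P_L}) + \mathbb{E}_{\mathbf{P_L}}\Bigl[\mathrm{KL}\bigl(p(\cdot\mid \mathbf{P_L})\,\|\,q_\phi(\cdot\mid \mathbf{P_L})\bigr)\Bigr].
\]
To obtain it, add and subtract $\log p(\mathbf{L}\mid\mathbf{P_L})$ inside the expectation. Since the KL term is nonnegative, the risk is an upper bound on $H(\mathbf{L}\mid\mathbf{P_L})$ for every head $\phi$. By the expressiveness assumption, there is a head with $q_\phi = p(\mathbf{L}\mid\mathbf{P_L})$, so the KL term vanishes. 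Hence
\[
\min_\phi \mathcal{R} = H(\mathbf{L}\mid\mathbf{P_L}).
\]

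Next I will use $I(\mathbf{P_L};\mathbf{L}) = H(\mathbf{L}) - H(\mathbf{L}\mid\mathbf{P_L})$. Here $H(\mathbf{L}) = \log N$ is a constant that does not depend on the encoder parameters, because $\mathbf{L}$ is uniform and the targets are fixed. Therefore minimizing the loss jointly over $\phi$ and the private encoder gives
\[
\min_{\text{enc}}\min_\phi \mathcal{R} = \log N - \max_{\text{enc}} I(\mathbf{P_L};\mathbf{L}).
\]
The two problems share the same minimizers (resp.\ maximizers), which is the claimed equivalence.

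The main obstacle is making precise what "equivalent" means, given that the loss is an empirical average over augmented graphs with injected Gaussian noise. I would handle this in two ways.
\begin{itemize}
\item State the result at the population level, or in the large-$N'$ limit by the law of large numbers.
\item Treat the noise and node sampling as part of the stochastic encoder defining $\mathbf{P_L}$. This keeps the joint law well defined and leaves the decomposition intact.
\end{itemize}
I would also remark that for a fixed, possibly suboptimal $\phi$, the loss still yields the variational lower bound $I(\mathbf{P_L};\mathbf{L}) \ge \log N - \mathcal{R}$. This shows that decreasing the loss pushes the mutual information upward even before the head is optimal.
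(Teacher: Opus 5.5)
Your proposal is correct and follows essentially the same route as the paper's proof. The paper also adds and subtracts $\log p(\mathbf{L}\mid\mathbf{P_L})$ to write the loss as $H(\mathbf{L}\mid\mathbf{P_L})$ plus an expected KL term, removes the KL term using the expressiveness of $\phi$, applies $I(\mathbf{P_L};\mathbf{L}) = H(\mathbf{L}) - H(\mathbf{L}\mid\mathbf{P_L})$, and ends with the same lower-bound remark for a restricted head. You additionally state two things the paper leaves implicit: that $H(\mathbf{L})=\log N$ does not depend on the encoder, and that the claim is a population-level statement (the paper's supplement simply redefines the loss as an expectation).
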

The proof is provided in the Supplementary Material.

We define the causal loss as a practical realization of the backdoor adjustment:
\begin{equation}
  \begin{aligned}
  \mathcal{L}_{\textit{causal}}
    = -\frac{1}{N'^2}\sum_{\tilde{G}_i\in \tau}\sum_{\tilde{G}_j\in \tau} 
      y_{\tilde{G}_i}^T \log \psi(h_{P_i} \oplus h_{C_{j}}), 
\end{aligned}
\label{eq3}
\end{equation}
where for each graph $\tilde{G}_i \in \tau$, its private embedding $h_{P_i}$ (causal component) is paired with the common embedding $h_{C_{j}}$ (confounder) of a another  graph $\tilde{G}_j \in \tau$. The operator $\oplus$ combines $h_{P_i}$ and $h_{C_j}$ (e.g., via concatenation or addition). This stratified pairing approximates backdoor adjustment by exposing the predictor to multiple realizations of the confounder for the same causal component. Together, the causal and matching losses encourage the removal of layer-specific information from the common embeddings.
In particular, optimizing these losses drives the layer-index predictions derived from the common embeddings toward a uniform distribution, indicating that the confounding influence of common features has been removed. This property is formalized in Proposition \ref{Theorem2}, whose complete proof is provided in the Supplementary Material.
\begin{proposition}
\label{Theorem2}
At optimality, minimizing $\mathcal{L}_{\textit{matching}}$ enforces 
that the predicted distribution of the layer index $\mathbf{L}$ based solely on the common embedding becomes uniform, under the assumption that 
$\mathbf{L} \sim \mathrm{Unif}\{1,\dots,N\}$. Moreover, assuming the predictor $\psi$ is sufficiently expressive, the stratified pairing in the causal loss enforces conditional independence between the common embedding and the layer index given the private embedding. This implies that, given the private embedding, the common embedding provides no additional information for predicting $\mathbf{L}$.
\end{proposition}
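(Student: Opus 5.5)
The argument splits into two parts, matching the two claims, and both are idealized "at optimality" statements.

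\emph{Part 1 (matching loss).} Consider a global minimizer of $\mathcal{L}_{\text{matching}}$ in \eqref{eq:matching}. Since each $C_\ell$ is produced by its own encoder, the encoders can make every layer's common embedding coincide with the consensus. The infimum value $0$ is therefore attainable, and optimality forces $C_\ell = S$ for all $\ell=1,\dots,N$. I would state this realizability explicitly as the meaning of ``at optimality.''

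Under this configuration the random variable $\mathbf{C}_{\mathbf{L}}$ takes the value $S$ regardless of $\mathbf{L}$, so $\mathbf{C}_{\mathbf{L}}$ is a degenerate (constant) function of $\mathbf{L}$. Hence $\mathbf{C}_{\mathbf{L}} \perp \mathbf{L}$. Applying Bayes' rule,
\[
p(\mathbf{L}=\ell \mid \mathbf{C}_{\mathbf{L}}=S) = \frac{p(S\mid \ell)\,p(\ell)}{\sum_{k} p(S\mid k)\,p(k)} = p(\ell) = \tfrac1N,
\]
using $\mathbf{L}\sim\mathrm{Unif}\{1,\dots,N\}$. Any Bayes-optimal predictor from the common embedding alone therefore outputs the uniform distribution, and $I(\mathbf{C}_{\mathbf{L}};\mathbf{L})=0$.

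Pooled augmented graph embeddings $h_{C_i}$ add node subsampling and small zero-mean noise. Both are generated independently of the layer index given the node set, so the same conclusion holds in expectation. This step needs an assumption that the subsampling ratio $r$ and noise level $\sigma$ are layer-independent.

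\emph{Part 2 (causal loss).} I would proceed exactly as in the proof of Proposition~\ref{theorem1}. Write the population version of \eqref{eq3}: pairs $(i,j)$ drawn independently from $\tau$ give the joint law $p(\mathbf{L}_i,\mathbf{P}_i)\,p(\mathbf{C}_j)$. The expected cross-entropy equals $H(\mathbf{L}\mid \mathbf{P}\oplus\mathbf{C}') + \mathbb{E}\,\mathrm{KL}\big(p(\mathbf{L}\mid\mathbf{P},\mathbf{C}')\,\|\,\psi\big)$, where $\mathbf{C}'$ is the re-paired common embedding. With $\psi$ sufficiently expressive, the minimizer is $\psi(h_P\oplus h_C)=p(\mathbf{L}\mid \mathbf{P}=h_P,\mathbf{C}'=h_C)$.

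Because $\mathbf{C}'$ is drawn from a different graph, independently of $(\mathbf{L},\mathbf{P})$, the factorization gives $p(\mathbf{L}\mid\mathbf{P},\mathbf{C}')=p(\mathbf{L}\mid\mathbf{P})$. The optimal $\psi$ is thus invariant in its second argument, so $I(\mathbf{L};\mathbf{C}'\mid\mathbf{P})=0$.

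The obstacle is transferring this from the re-paired $\mathbf{C}'$ to the actual paired embedding $\mathbf{C}_{\mathbf{L}}$. The diagonal terms $i=j$ are a vanishing $1/N'$ fraction, so the predictor is trained on the interventional distribution. This is exactly the backdoor-adjusted quantity $\sum_{C}p(\text{task}\mid\mathbf{P},C)p(C)$ of \eqref{backdoor}.

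To close the gap I would combine this with Part 1. At the matching optimum, $\mathbf{C}_{\mathbf{L}}$ is independent of $\mathbf{L}$ and, being constant, of $\mathbf{P}$ as well. The observational and interventional joint laws therefore coincide, giving $\mathbf{C}_{\mathbf{L}}\perp \mathbf{L}\mid \mathbf{P}_{\mathbf{L}}$. Consequently $p(\mathbf{L}\mid\mathbf{P}_{\mathbf{L}},\mathbf{C}_{\mathbf{L}})=p(\mathbf{L}\mid \mathbf{P}_{\mathbf{L}})$: given the private embedding, the common one adds no information.

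If the matching optimum is not exactly attained, I would state the result approximately. The conditional mutual information $I(\mathbf{L};\mathbf{C}_{\mathbf{L}}\mid \mathbf{P}_{\mathbf{L}})$ would be bounded by the gap between the observational and re-paired losses, via a KL/Pinsker argument.
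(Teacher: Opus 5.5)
Your proposal is correct and follows the paper's proof. Part 1 is the same argument: a global minimum forces $C_\ell = S$, so $\mathbf{C}_{\mathbf{L}}$ is a.s.\ constant and $p(\mathbf{L}\mid\mathbf{C}_{\mathbf{L}})=p(\mathbf{L})$. Part 2 uses the same factorization $p(\mathbf{L},\mathbf{P}_{\mathbf{L}})\,p(\mathbf{C}_{\mathbf{L}'})$ and the same entropy--KL decomposition to obtain $\psi^\star = p(\mathbf{L}\mid\mathbf{P}_{\mathbf{L}})$. There you use implicitly what the paper states explicitly: $\oplus$ must be injective (e.g.\ concatenation) for conditioning on $\mathbf{P}\oplus\mathbf{C}'$ to equal conditioning on $(\mathbf{P},\mathbf{C}')$.

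The paper stops at conditional independence for the re-paired $\mathbf{C}_{\mathbf{L}'}$, which holds by construction of the sampling. Your extra transfer to the actual $\mathbf{C}_{\mathbf{L}}$ is valid, but it follows from the matching optimum alone (a constant is conditionally independent of anything), not from the causal loss.
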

In practice, we observe empirical evidence consistent with this result. Specifically, as training progresses, the predicted layer-index distributions based on the common embeddings converge toward a uniform distribution.

Finally, we optimize the combined objective:
\begin{equation}
\begin{aligned}
\mathcal{L}_{\textit{final}} 
= \mathcal{L}_{\textit{matching}} 
+ \alpha \mathcal{L}_{\textit{self-supervised}} 
+ \beta \mathcal{L}_{\textit{causal}},
\end{aligned}
\label{eq4}
\end{equation}
where $\alpha$ and $\beta$ are hyperparameters that control the relative contributions of the self-supervised and causal loss terms, respectively, and are selected via hyperparameter tuning. 

Further analysis of CaDeM and its computational complexity is provided in the Supplementary Material.

\emph{Theoretical justification of disentanglement.} The integration of $\mathcal{L}_{\textit{matching}}$, $\mathcal{L}_{\textit{self-supervised}}$, and $\mathcal{L}_{\textit{causal}}$ jointly drives the disentanglement of common and private embeddings in multiplex graphs. We establish the following theoretical guarantee on the disentanglement of the learned embeddings. A detailed proof is provided in the Supplementary Material.
\begin{theorem}
\label{theorem3}
Assume a latent-factor model in which the observed data of a randomly drawn layer $\mathbf{L}$ of a multiplex graph is generated as
\begin{equation}
\mathbf{Z}_{\mathbf{L}}
=
q_{\mathbf{L}}(\mathbf{U}, \mathbf{V}_{\mathbf{L}}, \boldsymbol{\varepsilon}_{\mathbf{L}}),
\end{equation}
where $\mathbf{U}$ is a shared latent factor random variable,
$\mathbf{V}_{\mathbf{L}}$ is a layer-specific latent factor random variable,
$q_{\mathbf{L}}$ is a measurable function,
and $\boldsymbol{\varepsilon}_{\mathbf{L}}$ is a noise random variable. Assume that the layer-specific latent factors and noise terms are mutually independent across layers, and that the shared factor $\mathbf{U}$ is independent of each layer-specific factor. Let $f_C$ and $f_P$ denote measurable encoder functions mapping
$\mathbf{Z}_{\mathbf{L}}$ to the common and private embeddings:
\begin{equation}
\label{fp-fc}
\mathbf{C}_{\mathbf{L}} = f_C(\mathbf{Z}_{\mathbf{L}}),
\qquad
\mathbf{P}_{\mathbf{L}} = f_P(\mathbf{Z}_{\mathbf{L}}).
\end{equation}
Assume that the function classes for $f_C$ and $f_P$
are sufficiently rich to approximate any square-integrable measurable function of their inputs, and that all loss components in
Eq.~\eqref{eq4} attain their global minima. Then, at any global optimum:

\begin{enumerate}
\item[(i)] $\mathbf{C}_{\mathbf{L}}$ is almost surely a measurable function of $\mathbf{U}$ alone;

\item[(ii)] $\mathbf{P}_{\mathbf{L}}$ is almost surely a measurable function of $\mathbf{V}_{\mathbf{L}}$ alone;

\item[(iii)] $\mathbf{C_L}$ and $\mathbf{P_L}$ are disentangled in the sense that each encodes only role-specific information.
\end{enumerate}

\end{theorem}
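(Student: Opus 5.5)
We derive (i)–(iii) from what each loss forces at its global minimum, using Propositions~\ref{theorem1} and~\ref{Theorem2} as black boxes.

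\textbf{Step 1: the matching term makes the common embedding layer-invariant.} At the global minimum of $\mathcal{L}_{\text{matching}}$, each $C_\ell$ equals the consensus $S$. So $\mathbf{C}_{\mathbf{L}} = f_C(\mathbf{Z}_{\mathbf{L}})$ takes the same value for every realization of $\mathbf{L}$, given the shared factor. By Proposition~\ref{Theorem2}, the predictive distribution of $\mathbf{L}$ from $\mathbf{C}_{\mathbf{L}}$ is then uniform, hence equal to the prior, which gives $I(\mathbf{C}_{\mathbf{L}};\mathbf{L})=0$.

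\textbf{Step 2: the common embedding is a function of $\mathbf{U}$ alone, giving (i).} Write $\mathbf{C}_{\mathbf{L}} = g(\mathbf{U},\mathbf{V}_{\mathbf{L}},\boldsymbol{\varepsilon}_{\mathbf{L}})$ with $g = f_C\circ q_{\mathbf{L}}$.
\begin{itemize}
\item From Step~1, $C_\ell = C_{\ell'}$ almost surely for all $\ell\neq\ell'$.
\item The pairs $(\mathbf{V}_\ell,\boldsymbol{\varepsilon}_\ell)$ and $(\mathbf{V}_{\ell'},\boldsymbol{\varepsilon}_{\ell'})$ are independent of each other and of $\mathbf{U}$.
\item Condition on $\mathbf{U}$. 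Then $C_\ell$ is a function of one independent block and $C_{\ell'}$ of another, and they are almost surely equal.
\item Two independent random variables that are almost surely equal are almost surely constant.
\end{itemize}
Hence $C_\ell = \mathbb{E}[C_\ell\mid \mathbf{U}] =: h(\mathbf{U})$ almost surely, and $h$ is measurable by Doob--Dynkin. This requires $N\ge 2$ and the square-integrability of $f_C$ assumed in the statement, so that the conditional expectation is well defined.

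\textbf{Step 3: the private embedding carries all the layer information.} By Proposition~\ref{theorem1}, minimizing $\mathcal{L}_{\text{self-supervised}}$ maximizes $I(\mathbf{P}_{\mathbf{L}};\mathbf{L})$. With a rich enough $f_P$, the maximum is $H(\mathbf{L})=\log N$, so $\mathbf{L}$ is almost surely a function of $\mathbf{P}_{\mathbf{L}}$. The second part of Proposition~\ref{Theorem2} gives $\mathbf{C}_{\mathbf{L}}\perp \mathbf{L}\mid \mathbf{P}_{\mathbf{L}}$. Together with Step~2, pairing $\mathbf{P}_{\mathbf{L}}$ with any realization of $\mathbf{C}$ leaves the prediction of $\mathbf{L}$ unchanged, so $\psi$ can ignore the common part at the optimum.

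\textbf{Step 4: the private embedding is a function of $\mathbf{V}_{\mathbf{L}}$ alone, giving (ii).} This is the main obstacle. Nothing in the losses directly penalizes $\mathbf{P}_{\mathbf{L}}$ for also encoding $\mathbf{U}$ or noise; the losses only reward layer information.

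What I would try first is a minimality argument. Replace $f_P$ by $\tilde f_P := \mathbb{E}[f_P(\mathbf{Z}_{\mathbf{L}})\mid \mathbf{V}_{\mathbf{L}},\mathbf{L}]$. Because $\mathbf{V}_{\mathbf{L}}$ is the only layer-dependent latent, independent of $\mathbf{U}$, this replacement preserves the layer information, so $\mathcal{L}_{\text{self-supervised}}$ stays minimal. It also removes any residual dependence of the private branch on $\mathbf{U}$. That dependence would otherwise make $\mathbf{C}$ and $\mathbf{P}$ statistically dependent across re-pairings and break the invariance the causal loss requires for every pair $(i,j)$.

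The exact claim to prove is that the stratified causal loss at its optimum forces $\mathbf{P}_{\mathbf{L}}\perp \mathbf{U}$. Combined with $\mathbf{P}_{\mathbf{L}}$ being $\sigma(\mathbf{U},\mathbf{V}_{\mathbf{L}},\boldsymbol{\varepsilon}_{\mathbf{L}})$-measurable, this gives measurability with respect to $\mathbf{V}_{\mathbf{L}}$, up to the noise. The noise has to be handled by an identifiability assumption, for instance that $\boldsymbol{\varepsilon}_{\mathbf{L}}$ carries no information about $\mathbf{L}$ and is averaged out among optimal encoders. Concretely, I would select the optimum of minimal information, or argue that $\tilde f_P$ is also a global minimizer and interpret ``any optimum'' up to this equivalence.

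\textbf{Step 5: conclude (iii).} Since $\mathbf{U}$ is independent of $\mathbf{V}_{\mathbf{L}}$, (i) and (ii) give $\mathbf{C}_{\mathbf{L}}\perp\mathbf{P}_{\mathbf{L}}$, and each embedding encodes only its role-specific factor.
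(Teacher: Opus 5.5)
Your Steps 1--2 for (i) are essentially the paper's argument, and Step 5 matches (iii). The paper writes the matching loss as $\mathbb{E}\|f_C(\mathbf{Z}_{\mathbf{L}})-f_C(\mathbf{Z}_{\mathbf{L}'})\|^2$ with i.i.d.\ layer indices, conditions on $\mathbf{U}$, gets vanishing conditional variance and applies Doob--Dynkin; your ``independent and a.s.\ equal implies constant'' is the same point. The gap is (ii). Your diagnosis there is right: nothing penalizes $\mathbf{P}_{\mathbf{L}}$ for carrying $\mathbf{U}$. If $\mathbf{L}$ is recoverable from $\mathbf{Z}_{\mathbf{L}}$, take $\mathbf{P}_{\mathbf{L}}=(e_{\mathbf{L}},\mathbf{U})$ (one-hot layer index plus the shared factor) with $\mathbf{C}_{\mathbf{L}}\equiv S$. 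This attains the global minimum (zero) of all three losses yet depends on $\mathbf{U}$, so (ii) does not follow from the stated hypotheses.

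The repair you propose fails at each of its steps. First, the causal loss cannot force $\mathbf{P}_{\mathbf{L}}\perp\mathbf{U}$. By the argument of Proposition 2, its minimum over $\psi$ is $H(\mathbf{L}\mid\mathbf{P}_{\mathbf{L}})$, the same quantity the self-supervised loss minimizes, so it is blind to extra content in $\mathbf{P}_{\mathbf{L}}$; the example above is optimal for it. Second, even $\mathbf{P}_{\mathbf{L}}\perp\mathbf{U}$ together with $\sigma(\mathbf{U},\mathbf{V}_{\mathbf{L}})$-measurability would not give $\sigma(\mathbf{V}_{\mathbf{L}})$-measurability. With $\mathbf{V}_{\mathbf{L}}=(\mathbf{L},\mathbf{W})$ and $\mathbf{U},\mathbf{W}$ i.i.d.\ uniform on $\{0,1\}$, the embedding $\mathbf{P}_{\mathbf{L}}=(\mathbf{L},(\mathbf{U}+\mathbf{W})\bmod 2)$ is independent of $\mathbf{U}$ and globally optimal, but not a function of $\mathbf{V}_{\mathbf{L}}$. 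Third, your replacement $\tilde f_P=\mathbb{E}[f_P(\mathbf{Z}_{\mathbf{L}})\mid\mathbf{V}_{\mathbf{L}},\mathbf{L}]$ need not be a function of $\mathbf{Z}_{\mathbf{L}}$, so it may not be an admissible encoder. Averaging can also erase the layer information: $\mathbf{P}=\mathbf{U}$ on layer 1 and $2\mathbf{U}$ on layer 2, with $\mathbf{U}=\pm1$, identifies the layer through $|\mathbf{P}|$, yet both conditional means are $0$.

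The paper does not derive (ii) from the losses either; it adds hypotheses. It assumes $\mathbf{L}=\Omega(\mathbf{V}_{\mathbf{L}})$, so that $\mathbf{V}_{\mathbf{L}}$ is sufficient for $\mathbf{L}$. It then uses $H(\mathbf{L}\mid\mathbf{P}_{\mathbf{L}})=H(\mathbf{L}\mid\mathbf{Z}_{\mathbf{L}})+I(\mathbf{L};\mathbf{Z}_{\mathbf{L}}\mid\mathbf{P}_{\mathbf{L}})$ to show that the self-supervised optimum $\mathbf{P}^\star_{\mathbf{L}}$ is sufficient for $\mathbf{L}$ relative to $\mathbf{Z}_{\mathbf{L}}$. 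Finally it \emph{assumes} $\mathbf{P}^\star_{\mathbf{L}}$ is (approximately) minimal sufficient, citing the causal loss and finite capacity only as heuristic motivation, and concludes $\sigma(\mathbf{P}^\star_{\mathbf{L}})\subseteq\sigma(\mathbf{V}_{\mathbf{L}})$. Your fallback of selecting a minimal-information optimum is the same move. The honest form of your Step 4 is therefore to state such hypotheses, not to try to extract $\mathbf{P}_{\mathbf{L}}\perp\mathbf{U}$ from the causal loss.

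Be explicit about identifiability as well. Your Step 3 claim that $\max I(\mathbf{P}_{\mathbf{L}};\mathbf{L})=\log N$ tacitly assumes $H(\mathbf{L}\mid\mathbf{Z}_{\mathbf{L}})=0$; in general the maximum is $I(\mathbf{Z}_{\mathbf{L}};\mathbf{L})$. That assumption is exactly what makes the minimality route close. With $H(\mathbf{L}\mid\mathbf{Z}_{\mathbf{L}})=0$ and $\sigma(\mathbf{P}^\star_{\mathbf{L}})=\sigma(\mathbf{L})$ you get $\mathbf{P}^\star_{\mathbf{L}}=g(\mathbf{L})=g(\Omega(\mathbf{V}_{\mathbf{L}}))$. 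Without it, the minimal sufficient statistic $p(\mathbf{L}\mid\mathbf{Z}_{\mathbf{L}})$ can depend on the noise, because $\mathbf{V}_{\mathbf{L}}$ is not itself a statistic of $\mathbf{Z}_{\mathbf{L}}$.
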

As a result, common embeddings depend only on the shared latent factor, 
 while private embeddings depend only on the layer-specific latent factor.  
This separation arises from the joint effect of our three loss components: the matching loss eliminates layer-dependent variability from $\mathbf{C_L}$, the self-supervised loss encourages $\mathbf{P_L}$ to capture layer-specific information necessary to identify the layer, while the causal loss suppresses the predictive influence of common embeddings, which otherwise act as confounders 
and prevents layer-specific signals from leaking into $\mathbf{C_L}$. 
These mechanisms align
the common and private embeddings 
with the underlying shared and layer-specific latent factors, yielding disentangled representations in which each component captures only the information required for its intended role.

\section{Experiments}
\label{sec:exp}

\subsection{Evaluation setup}
\label{eval_setup}
We present results on four synthetic and five real-world datasets. Synthetic experiments enable controlled evaluation, as the shared and layer-specific factors are explicitly defined, allowing us to assess whether common and private embeddings capture the intended information.
We further evaluate CaDeM on five real-world datasets spanning citation networks (ACM, DBLP) \cite{Jing_2021}, movie-related datasets (IMDB \cite{Jing_2021}, Freebase \cite{pmlr-v202-mo23a}), and a neuroscience dataset from the Human Connectome Project (HCP) \cite{VanEssen2013HCP}, demonstrating strong performance across diverse domains.

\emph{Baselines.} For comparison, we include six single-view methods, including three traditional unsupervised approaches (DeepWalk \cite{Perozzi_2014}, VGAE \cite{Kipf2016VariationalGA}, Node2Vec \cite{10.1145/2939672.2939754}) and three self-supervised approaches (DGI \cite{veličković2018deep}, MVGRL \cite{pmlr-v119-hassani20a}, GraphMAE \cite{hou2022graphmae}), as well as four multiplex methods, including one unsupervised (MNE \cite{ijcai2018p428}) and three self-supervised approaches (HDMI \cite{Jing_2021}, MCGC \cite{NEURIPS2021_10c66082}, DMG \cite{pmlr-v202-mo23a}). Finally, we include graph2vec \cite{Narayanan2017graph2vecLD}, which directly learns graph-level embeddings.

\begin{table*}[!!!!!!!!!!!!!t]
\caption{Comparing performance (Macro- and Micro-F1) of embeddings across selected and layer-specific labels on Syn1.}
\label{tab1}
\centering
\renewcommand{\arraystretch}{1.5}
\resizebox{2.07\columnwidth}{!}{%
\begin{tabular}{l cc cc cc cc}
\toprule
\textbf{Embeddings} & \multicolumn{2}{c}{\textbf{Final labels}} & \multicolumn{2}{c}{\textbf{Labels of layer 1}} & \multicolumn{2}{c}{\textbf{Labels of layer 2}} & \multicolumn{2}{c}{\textbf{Labels of layer 3}} \\
 & \textbf{Macro-F1} & \textbf{Micro-F1} & \textbf{Macro-F1} & \textbf{Micro-F1} & \textbf{Macro-F1} & \textbf{Micro-F1} & \textbf{Macro-F1} & \textbf{Micro-F1} \\
\midrule
Combined embeddings & \textbf{0.8178 $\pm$ 0.0516} & \textbf{0.8200 $\pm$ 0.0510} & \textbf{0.9611 $\pm$ 0.0476} & \textbf{0.9600 $\pm$ 0.0490} & \textbf{0.9799 $\pm$ 0.0246} & \textbf{0.9800 $\pm$ 0.0245} & \textbf{0.9770 $\pm$ 0.0282} & \textbf{0.9800 $\pm$ 0.0245} \\
Private embeddings of the 1st layer & \textbf{0.8158 $\pm$ 0.0545} & \textbf{0.8202 $\pm$ 0.0511} & \textbf{0.9904 $\pm$ 0.0191} & \textbf{0.9900 $\pm$ 0.0200} & 0.4714 $\pm$ 0.0917 & 0.4900 $\pm$ 0.0663 & 0.2463 $\pm$ 0.0533 & 0.2600 $\pm$ 0.0374 \\
Private embeddings of the 2nd layer & 0.1946 $\pm$ 0.0684 & 0.2200 $\pm$ 0.0678 & 0.2502 $\pm$ 0.1208 & 0.2600 $\pm$ 0.1158 & \textbf{0.9476 $\pm$ 0.0032} & \textbf{0.9500 $\pm$ 0.0000} & 0.2774 $\pm$ 0.0659 & 0.3500 $\pm$ 0.0548 \\
Private embeddings of the 3rd layer & 0.3248 $\pm$ 0.0483 & 0.3500 $\pm$ 0.0316 & 0.3063 $\pm$ 0.0274 & 0.3200 $\pm$ 0.0245 & 0.3119 $\pm$ 0.0896 & 0.3500 $\pm$ 0.0949 & \textbf{0.9791 $\pm$ 0.0260} & \textbf{0.9800 $\pm$ 0.0245} \\
\bottomrule
\end{tabular}
}
\end{table*}

\emph{Evaluation on downstream tasks.} We evaluate performance on node classification, node clustering, and graph classification tasks. 
For node clustering, we apply K-means to the learned node embeddings, and clustering quality is assessed using the Adjusted Rand Index (ARI) \cite{Hubert1985ComparingP} and Normalized Mutual Information (NMI) \cite{10.1162/153244303321897735}. For classification tasks, node- and graph-level embeddings are evaluated using a supervised classifier (a single-layer perceptron). To integrate common and private embeddings, we employ a Multi-Head Attention (MHA) combiner with four heads, whose outputs are averaged to produce the final representation. For fair comparison, single-view baselines are trained independently on each graph layer, and their embeddings are combined using the same combiner module. If a baseline already includes its own cross-layer fusion mechanism, we use the original implementation. To ensure robust evaluation, we adopt nested stratified K-fold cross-validation for both performance estimation and hyperparameter tuning: the outer loop provides an unbiased estimate of generalization performance, while the inner loop performs model selection without test leakage. Classification performance is measured using Macro-F1 and Micro-F1 scores. For feature construction, when only one shared feature matrix is available, we generate layer-specific features by randomly dropping entries from the shared feature matrix. When node features are unavailable, we use one-hot encodings of node identities.

\subsection{Synthetic dataset}
In graph data, shared and layer-specific information may arise from different components,
such as graph connectivity, node features, temporal dynamics, or their combinations. Accordingly, common embeddings should capture factors that are consistent across layers, whereas private embeddings should encode layer-dependent variations. Since these signals may be subtle or partially latent, the model must identify and disentangle both explicit and hidden patterns across and within layers.
We construct four synthetic multiplex datasets, denoted Syn1–Syn4, each isolating a different source of shared and layer-specific variation: (i) distinct graph structures with shared node features (Syn1), (ii) shared features and global structure with layer-specific perturbations (Syn2),
(iii) shared topology with layer-dependent graph signals, including variations in frequency-domain (Syn3) and nonlinear dynamical processes (Syn4). This suite provides controlled tests of disentanglement across structure, attributes, spectral content, and dynamics, spanning both node- and graph-level tasks.

\subsubsection{Distinct graph structures with shared node features}
In this setting, which we denote as \textbf{Syn1}, we design a synthetic multiplex graph where layers differ in structural organization while sharing common node features.
Each layer is constructed using a Stochastic Block Model (SBM), in which nodes are assigned to random communities and edges are sampled with higher probability for intra-community pairs than for inter-community pairs ($p_{\text{intra}}=0.7$, $p_{\text{inter}}=0.1$).
We generate $N = 3$ layers, each with its own community structure, containing three distinct communities, with a total of $M = 100$ nodes. Node features are initialized as identity matrices, and final node labels are obtained by probabilistically sampling from the label assignments of each layer, allowing us to model varying semantic interpretations across layers. Final labels are sampled with probabilities [0.8, 0.1, 0.1], making the resulting labeling predominantly aligned
with the first layer.

Table~\ref{tab1} evaluates whether our model learns private, layer-specific components in an interpretable way. 
As expected, private embeddings perform best on their respective layer-specific labels. 
In addition, the first-layer private embeddings achieve the strongest performance on the final labels, consistent with the fact that most labels originate from the first layer’s communities. This confirms that these embeddings effectively capture layer-specific properties while remaining separated from the shared embeddings.
Table \ref{tab2}
 reports results on final-label prediction, where CaDeM consistently outperforms all baselines. These improvements highlight the benefit of disentangling shared and layer-specific factors via the proposed objectives, producing more discriminative embeddings for prediction.

\begin{table}[!t]
\caption{Comparison of Macro- and Micro-F1 scores for the final label prediction across different baselines on Syn1.}
\label{tab2}
\centering
\footnotesize
\setlength{\tabcolsep}{12pt}
\renewcommand{\arraystretch}{1.1}
\resizebox{\columnwidth}{!}{%
\begin{tabular}{lcc}
\toprule
\textbf{Models} & \textbf{Macro F1} & \textbf{Micro F1} \\
\midrule
DeepWalk  & $0.7709 \pm 0.1075$ & $0.7800 \pm 0.0927$ \\
Node2Vec  & $0.3033 \pm 0.0737$ & $0.3300 \pm 0.0510$ \\
VGAE      & $0.7962 \pm 0.0294$ & $0.8000 \pm 0.0316$ \\
DGI       & $0.6782 \pm 0.1954$ & $0.6800 \pm 0.1913$ \\
MVGRL     & $0.7779 \pm 0.1098$ & $0.7800 \pm 0.1122$ \\
GraphMAE  & $0.7933 \pm 0.0600$ & $0.8000 \pm 0.0548$ \\
MNE       & $0.6899 \pm 0.0961$ & $0.7000 \pm 0.0894$ \\
HDMI      & $0.7842 \pm 0.0357$ & $0.7900 \pm 0.0374$ \\
MCGC      & $0.5906 \pm 0.0376$ & $0.6000 \pm 0.0447$ \\
DMG       & $0.7804 \pm 0.0303$ & $0.7800 \pm 0.0245$ \\
\midrule
\textbf{CaDeM} & \textbf{0.8178 $\pm$ 0.0516} & \textbf{0.8200 $\pm$ 0.0510} \\
\bottomrule
\end{tabular}
}
\end{table}

\subsubsection{Shared features and global structure with layer-specific perturbations}
Unlike the previous synthetic setting, where layers differed in graph structure but shared node features, \textbf{Syn2} considers a multiplex graph where all layers share a global community structure, while exhibiting layer-specific deviations. This setup reflects real-world multiplex systems that maintain a consistent overall organization but undergo local, layer-dependent reorganizations.
We first assign each node to a shared community label $c_i \in \{1, \dots, K\}$, defining the global structure across layers. The shared connectivity pattern is specified through a probability matrix $\mathcal{P}$, whose entries determine connection probabilities between communities.
In our experiments, $\mathcal{P}$ assigns high intra-community connection probabilities ($p_\text{intra} = 0.3$) and uniformly low inter-community probabilities ($p_\text{inter} = 0.01$).
To introduce layer-specific variations, we construct for each layer $\ell$ a private label set $s^{(\ell)}$  by randomly reassigning  
$50\%$ of the node labels.  
Each layer also defines its private connectivity matrix $Q^{(\ell)}$, which we set equal to $\mathcal{P}$ for simplicity. Edges are then sampled independently according to a convex combination of the shared and private structures, i.e., $p[(i,j) \in E^{(\ell)}] = \gamma \mathcal{P}_{c_i,c_j} + (1 - \gamma) Q^{(\ell)}_{s^{(\ell)}_i,s^{(\ell)}_j},
$
where $\gamma \in [0,1]$ controls the relative contribution of the shared and private component. We consider the one-hot encoding of the nodes as features, and generate a multiplex graph of $M = 1000$ nodes, $N = 3$ layers, and 3 communities per layer. 

For the experimental evaluation, we apply K-means to both the common and private embeddings and assess how effectively these embeddings recover the ground truth shared 
and layer-specific community structure, respectively. To this end, we vary the parameter $\gamma$ from $0$ to $1$, thereby controlling the trade-off between shared and private structural influence during data generation.  
Fig. \ref{figsyn2} presents the resulting trend of 
ARI as a function of $\gamma$. Similar behavior is observed for NMI.
\begin{figure}[!!!t]
  \centering
  
    \includegraphics[width=0.97\linewidth,valign=c]{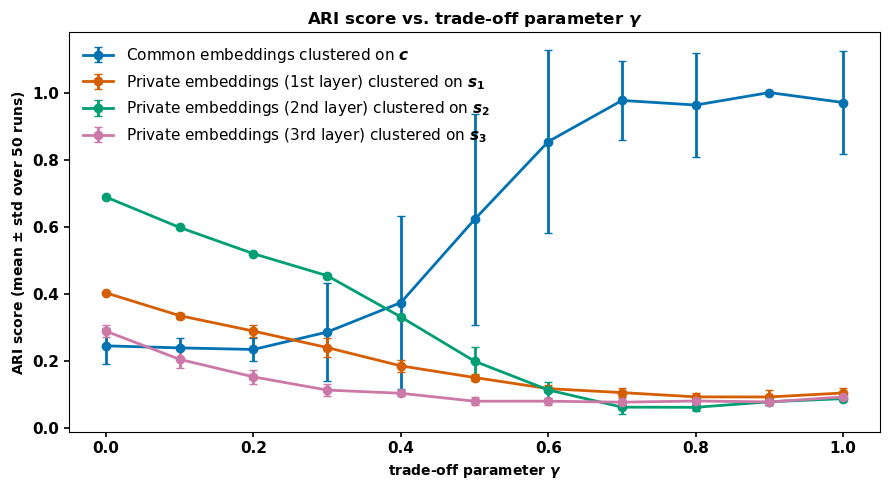}
  
  \caption{Clustering performance as a function of the trade-off parameter $\gamma$ on Syn2. This plot reports the ARI score obtained by applying K-means clustering to the learned embeddings.}
  \label{figsyn2}
\end{figure}
As illustrated in the plots, increasing the value of $\gamma$ improves the ability of the common embeddings to recover the shared community labels, as the shared structural component increasingly dominates the network organization.  
Conversely, a larger $\gamma$ reduces the strength of layer-specific variations, making the layer-specific labels harder for the private embeddings to recover. These results demonstrate that CaDeM effectively disentangles common and private embeddings according to their specific roles. As $\gamma$ varies, the two types of embeddings exhibit distinct performance trends, indicating minimal interference between them.

\begin{figure*}[!t]
  \centering

  \begin{minipage}[t]{0.33\textwidth}
    \centering
    \textbf{(a)}
    \includegraphics[width=0.8\linewidth]{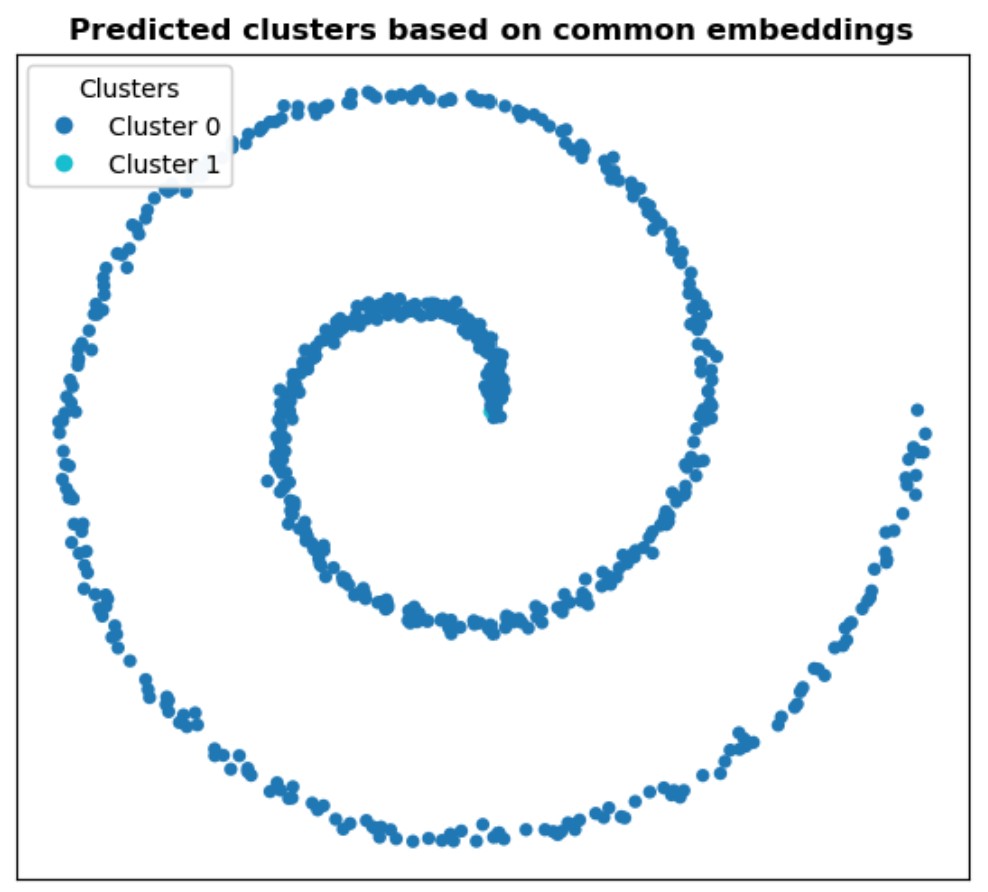}
    \label{figsyn3second3}
  \end{minipage}\hfill
  \begin{minipage}[t]{0.33\textwidth}
    \centering
    \textbf{(b)}
    \includegraphics[width=0.8\linewidth]{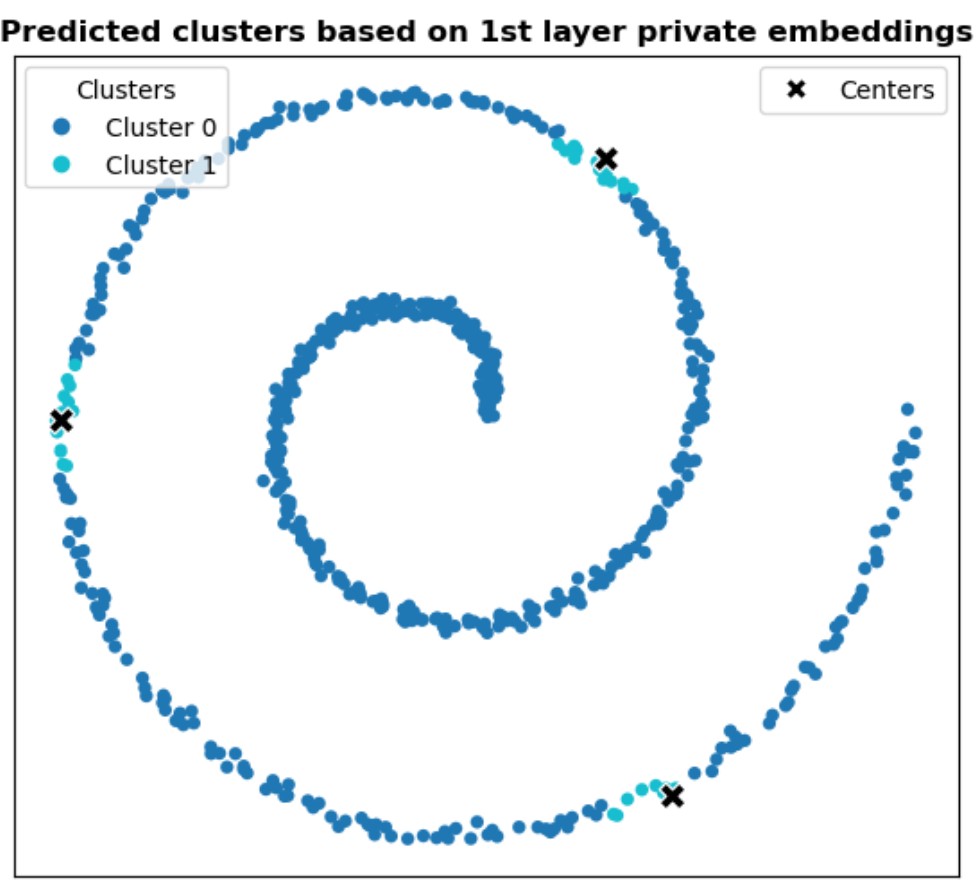}
    \label{figsyn3second4}
  \end{minipage}\hfill
  \begin{minipage}[t]{0.33\textwidth}
    \centering
    \textbf{(c)}
    \includegraphics[width=0.8\linewidth]{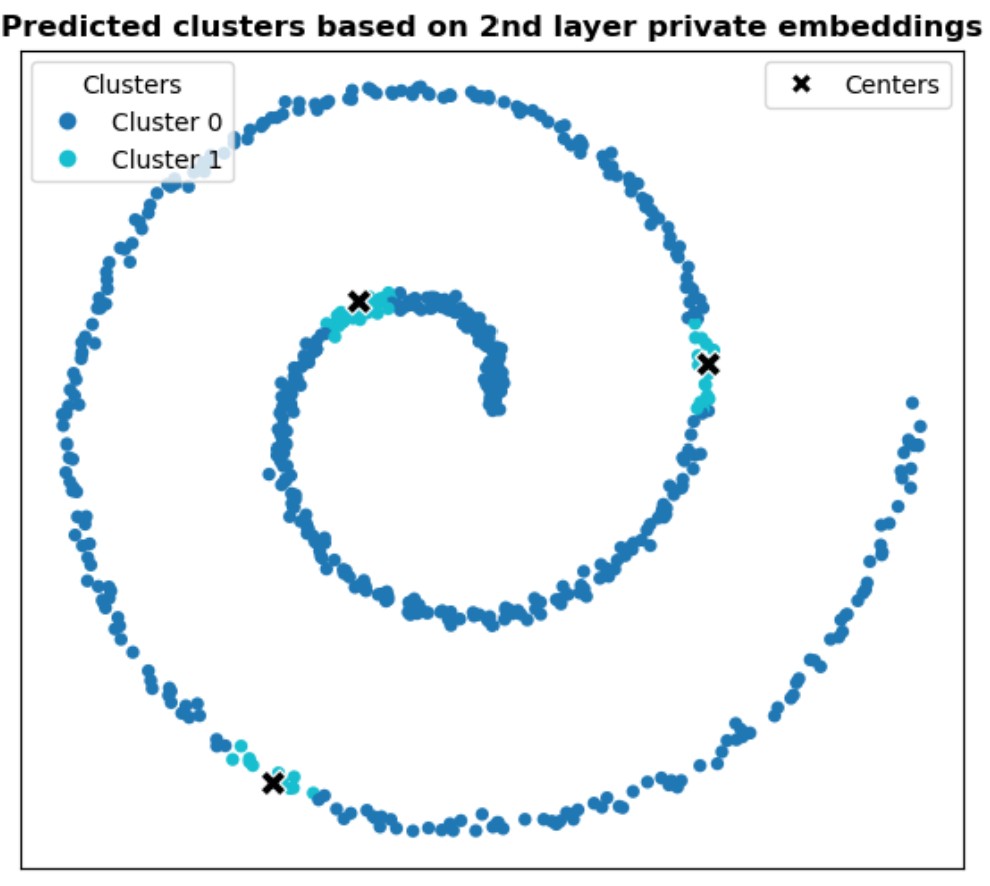}
    \label{figsyn3second5}
  \end{minipage}

  \caption{Visualization of predicted cluster assignments for Syn3. Panel (a) shows the clusters obtained from the common embeddings. Panels (b) and (c) show the clusters obtained from the private embeddings
of layer 1 and layer 2, respectively, with the corresponding ground-truth center nodes overlaid.}
  \label{figsyn3second}
\end{figure*}

\begin{table}[!t]
\caption{Clustering performance (ARI and NMI) of embeddings evaluated on the true clusters of each layer in Syn3.}
\label{tabsyn3}
\setlength{\tabcolsep}{15pt}
\renewcommand{\arraystretch}{1.5}
\resizebox{\columnwidth}{!}{%
\begin{tabular}{lcccc}
\toprule
\multirow{2}{*}{\textbf{Embeddings}} & 
\multicolumn{2}{c}{\textbf{Layer 1 clusters}} & 
\multicolumn{2}{c}{\textbf{Layer 2 clusters}} \\
\cmidrule(lr){2-3} \cmidrule(lr){4-5}
 & \textbf{ARI} & \textbf{NMI} & \textbf{ARI} & \textbf{NMI} \\
\midrule
Common embedding  
& 0.0099 $\pm$ 0.0590 
& 0.0077 $\pm$ 0.0227  
& 0.0362 $\pm$ 0.1345 
& 0.0261 $\pm$ 0.0910 \\

Private embedding of the 1st layer 
& \textbf{0.4935 $\pm$ 0.0286} 
& \textbf{0.4066 $\pm$ 0.0234} 
& -0.0734 $\pm$ 0.0267 
& 0.0348 $\pm$ 0.0174 \\

Private embedding of the 2nd layer 
& -0.0911 $\pm$ 0.0911 
& 0.0450 $\pm$ 0..0450 
& \textbf{0.6441 $\pm$ 0.0644 } 
& \textbf{0.5405 $\pm$ 0.0135} \\
\bottomrule
\end{tabular}
}
\end{table}

\subsubsection{Layer-Specific features with spectral variations}
Next, we design a synthetic dataset, denoted as \textbf{Syn3}, in which layers share the same underlying structure but exhibit distinct graph signal frequency content. 
This setup emulates scenarios where multiplex systems share a common low-frequency signal while differing in localized high-frequency components. We construct a synthetic graph on a 2-dimensional spiral manifold and build a shared $K$-nearest-neighbor graph ($K=15$) to preserve spatial locality. From this graph, we compute the combinatorial Laplacian $L$.

Node features are composed of a shared low-frequency component $x_{\text{common}}$, capturing smooth patterns, and a layer-specific component $x_{P_{\ell}}$ which encodes localized, high-frequency variations unique to each layer. $x_{\text{common}}$ is generated by applying a low-pass filter $h(\lambda) = e^{-\lambda}$ to a Gaussian random vector, producing a smooth signal over the graph, where $\lambda$ denotes a Laplacian eigenvalue.
To generate layer-specific signals, we define a sharp band-pass filter $g(\lambda) = \lambda e^{-\lambda^2}$. For each layer $\ell$, a set of center nodes is selected along the spiral manifold, and for each center $r_\ell$, a local multi-hop neighborhood $\mathcal{N}(r_\ell)$ is extracted using breadth-first search (BFS). To ensure smooth spatial decay, each node $q \in \mathcal{N}(r_\ell)$ is assigned a cosine-taper weight $w_q$. Localized wavelet atoms are then constructed as $\psi_{t,q} = g(tL)\delta_q$,
where $t > 0$ controls the scale of localization and $\delta_q$ denotes the Kronecker delta centered at node $q$.
The layer-specific signal for layer $\ell$ is defined as a superposition of localized wavelets: $x_{P_\ell}
=
\sum_{r_\ell}
\sum_{q \in \mathcal{N}(r_\ell)}
w_q \psi_{t, {q}}
$.
Node labels in each layer are assigned based on hop distance from the set of layer-specific center nodes. Nodes within a fixed radius (3 hops) are grouped into the same class, while all remaining nodes are treated as background.
The final layer signal is formed as $\mathbf{x}_\ell = 0.3\, \mathbf{x}_{\mathrm{common}} + \mathbf{x}_{P_\ell}$,
yielding a dataset with $M = 600$ nodes, $N = 2$ layers, and $3$ center per layer. 

For evaluation, we apply the K-means clustering algorithm with $K=2$ to both the common and private embeddings to analyze how well each representation detects layer-specific patterns. Fig.~\ref{figsyn3second} displays the predicted clusters obtained from each embedding. The quantitative results are summarized in Table~\ref{tabsyn3}.
As shown in Fig. \ref{figsyn3second} and Table~\ref{tabsyn3}, the common embeddings fail to reveal meaningful clustering since they intentionally discard layer-specific features.
Although they may encode shared structural properties, K-means cannot leverage structural information to detect feature patterns in this case. 
In contrast, the private embeddings successfully recover the layer-specific spectral patterns.
These results indicate that the common and private embeddings encode complementary information, each preserving the signals required for its purpose, even when the two layers share the same graph structure and differ only in their graph signals. 

\subsubsection{Layer-Specific Features with Dynamical Variations}
We evaluate our method on a synthetic dataset (\textbf{Syn4}) for graph-level tasks, where layers share the same topology but differ in the nonlinear dynamical systems governing the node-state evolution, enabling controlled analysis of layer-specific dynamics.

We generate a collection of 30 connected Erdős–Rényi (ER) graphs \cite{Bollobas2001} with $M = 100$ nodes and edge probability $p=0.08$, each serving as the shared topology of a 2-layer multiplex graph ($N=2$).
To avoid structural bias in classification, the same set of base graphs is reused to generate samples across all dynamical classes. We consider three major classes of dynamics: degree-driven, homogeneous, and degree-avert \cite{harush2017dynamic}. 
For each class, two nonlinear dynamical systems are simulated: population \cite{novozhilov2006biological} and regulatory dynamics \cite{karlebach2008modelling} for the degree-driven class; epidemic \cite{moreno2002epidemic, hufnagel2004forecast, dodds2005generalized} and biochemical dynamics \cite{voit2000computational, PhysRevLett.106.150602} for the homogeneous class; and mutualistic \cite{Holling1959SomeCO} and regulatory dynamics \cite{karlebach2008modelling} for the degree-avert class.
Each dynamical process produces a unique temporal evolution of node states. 
For each multiplex sample, we simulate the same dynamical class on both layers but use two different dynamics from that class and random initial conditions, producing a total of $90$ multiplex graphs.
All dynamical processes
are simulated using fixed-step numerical integration. 

For graph-level prediction, node embeddings are aggregated via mean pooling to obtain common and private graph embeddings, which are used to predict the dynamical class.
To ensure fair evaluation, we group all multiplex graphs derived from the same base topology within the same fold in stratified nested cross-validation. Table \ref{tab:data4_merged} reports the 
results for classifying multiplex graphs into the three dynamical classes. In addition, we evaluate layer-level discrimination by separating the two layers of each multiplex sample, yielding $180$ single-layer graphs, and classifying them into the six dynamical subclasses using pooled private embeddings, as reported in the last row of Table~\ref{tab:data4_merged}. 
Results in Table \ref{tab:data4_merged} show that common embeddings perform near randomly when predicting dynamical classes, as expected since they are designed to exclude layer-specific information. In contrast, private embeddings achieve near-perfect performance, demonstrating their ability to capture fine-grained, layer-specific dynamical signatures, including distinctions among subclasses within the same class, with minimal interference.
Combined embeddings also perform well, although their scores are slightly lower than those of the private embeddings alone. This decrease is expected, as the combined representation incorporates some contributions from the common embeddings, which contain information irrelevant to the dynamical class. Overall, these results confirm that the joint objective enforces a clear separation between common and private embeddings, with each capturing information relevant to its role. 

\begin{table}[t]
    \centering
    \caption{Classification results on Syn4 for different tasks and embedding types.}
    \label{tab:data4_merged}
    \small
    \resizebox{\columnwidth}{!}{%
    \begin{tabular}{p{4.2cm} l cc}
        \toprule
        \textbf{Task} & \textbf{Embeddings} & \textbf{Macro-F1 score} & \textbf{Micro-F1 score} \\
        \midrule
        Multiplex graph classification\newline (3 dynamical classes)
        & Combined graph embeddings                & %$0.8401 \pm 0.0365$ & $0.8427 \pm 0.0367$ \\
        $0.9329 \pm 0.0548$ & $0.9333  \pm 0.0544$ \\
        
        & Common graph embeddings                  & $0.3844 \pm 0.0790$ & $0.3889 \pm 0.0786$ \\
        & Private graph embeddings (1st layer)     & $0.9892 \pm 0.0054$ & $0.9893 \pm 0.0053$ \\
        & Private graph embeddings (2nd layer)     & $0.9888 \pm 0.0224$ & $0.9889 \pm 0.0222$ \\
        \midrule
        Single-layer graph classification\newline (6 dynamical subclasses)
        & Private graph embeddings           & $0.9944 \pm 0.0112$ & $0.9944 \pm 0.0111$ \\
        \bottomrule
    \end{tabular}%
    }
\end{table}

\subsection{Real-world datasets}

 We next evaluate CaDeM on real-world multiplex benchmarks, where shared and layer-specific factors are unknown and often entangled.

\begin{figure*}[!t]
  \centering

  % --- First row ---
  \begin{minipage}[t]{0.5\textwidth}
    \centering
    \includegraphics[width=\linewidth]{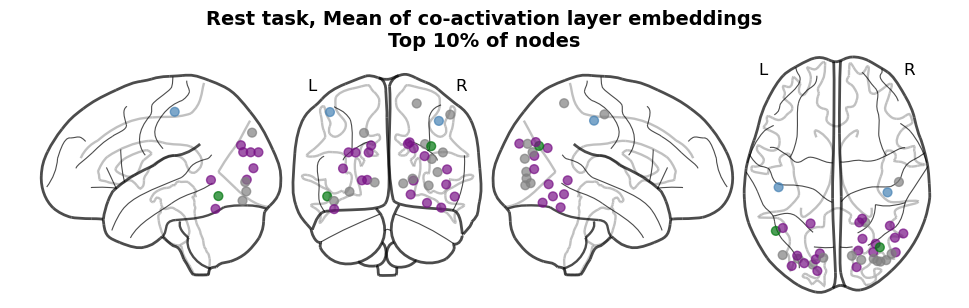}
    \label{hcp-rest-co}
  \end{minipage}\hfill
  \begin{minipage}[t]{0.5\textwidth}
    \centering
    \includegraphics[width=\linewidth]{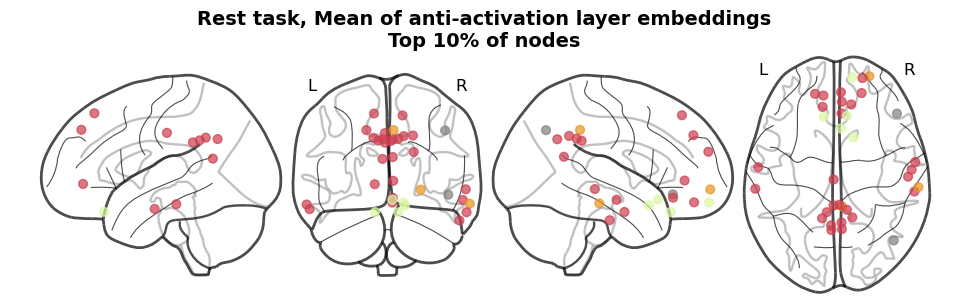}
    \label{hcp-rest-anti}
  \end{minipage}

  \vspace{0.01cm}

  % --- Second row ---
  \begin{minipage}[t]{0.5\textwidth}
    \centering
    \includegraphics[width=\linewidth]{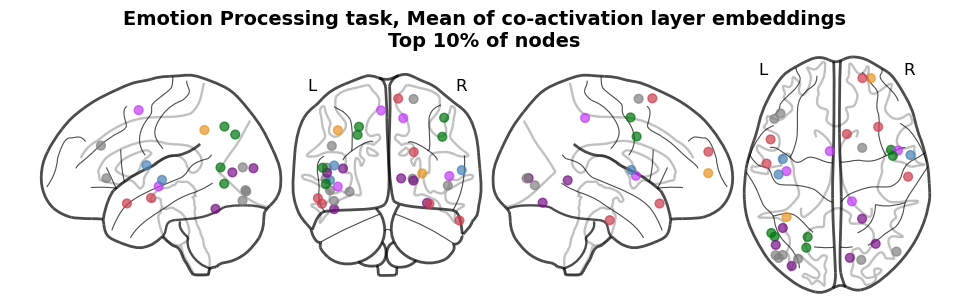}
    \label{hcp-emo-co}
  \end{minipage}\hfill
  \begin{minipage}[t]{0.5\textwidth}
    \centering
    \includegraphics[width=\linewidth]{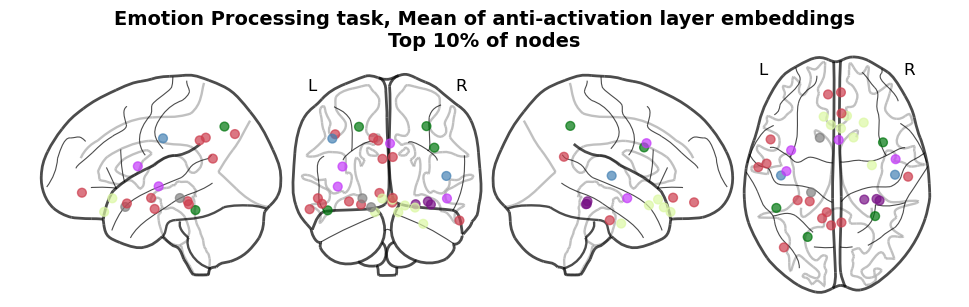}
    \label{hcp-emo-anti}
  \end{minipage}

  \vspace{0.01cm}

  % --- Third row ---
  \begin{minipage}[t]{0.5\textwidth}
    \centering
    \includegraphics[width=\linewidth]{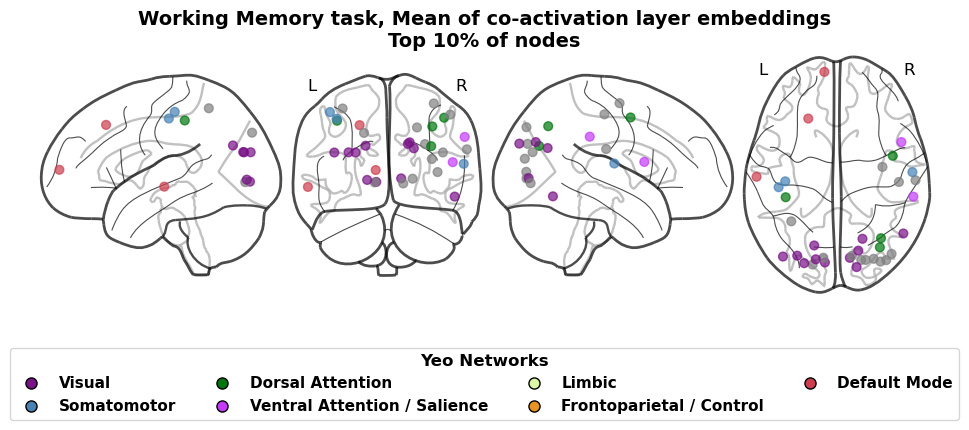}
    \label{hcp-wm-co}
  \end{minipage}\hfill
  \begin{minipage}[t]{0.5\textwidth}
    \centering
    \includegraphics[width=\linewidth]{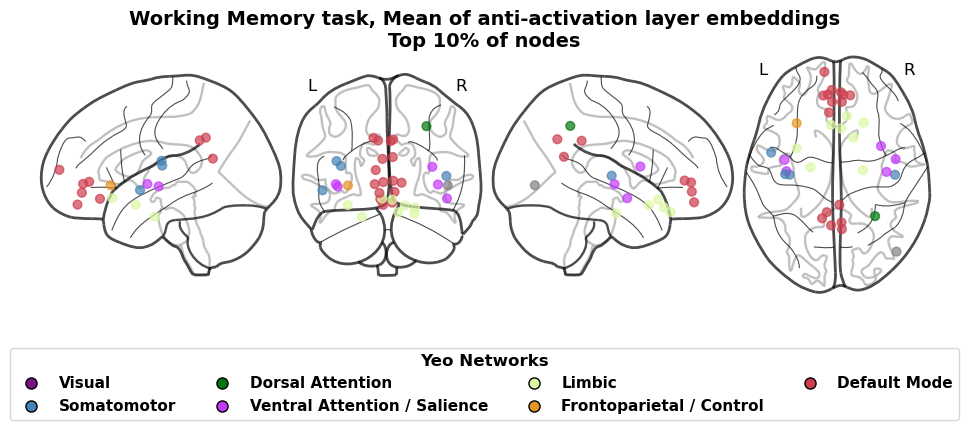}
    \label{hcp-wm-anti}
  \end{minipage}

  \caption{For each task, we display the top $10\%$ of nodes based on the mean co-activation (left) and anti-activation (right) layer embeddings across subjects. Nodes are projected onto the cortical surface and colored according to the Yeo network atlas, revealing distinct task-dependent patterns across functional brain networks.}
  \label{hcp-interp}
\end{figure*}

\subsubsection{Graph classification}
We evaluate our framework on fMRI data from the Human Connectome Project (HCP) S1200 release \cite{VanEssen2013HCP}.
We use the BOLD time series derived from the Multi-Modal Parcellation (MMP) with 360 brain regions (nodes) \cite{glasser2016multi} extracted from the preprocessed HCP data. 
The time series are further processed using commonly adopted preprocessing for functional connectivity estimation, including linear detrending, band-pass filtering (0.008–0.08 Hz), confound regression, and z-score standardization of the extracted BOLD signals. The resulting signals capture low-frequency neural fluctuations and are used to construct the functional connectivity (FC) matrix by computing the Pearson correlation between all regions. In addition, we compute parcel-level skewness and kurtosis as node features characterizing the structure of regional neural activity.
The FC matrix for each subject and task contains positive and negative correlations between brain regions, representing co-activation, in which regions fluctuate together, and anti-activation, in which regions exhibit opposing activity patterns. We therefore split the matrix into a co-activation component using positive correlations and an anti-activation component using the absolute values of negative correlations.
Together, these two components define a 2-layer multiplex graph ($N=2$) for each subject and task.

We focus on three tasks, Rest, Emotion Processing, and Working Memory, covering an unconstrained baseline state and two cognitively demanding conditions.
We select $150$ subjects and use the same set across all tasks, yielding 450 two-layer multiplex graphs, with skewness and kurtosis as node features. 
Graph-level embeddings are obtained using sum pooling on the CaDeM embeddings.
Each multiplex graph is classified into one of the three tasks, and performance is evaluated using nested cross-validation with subject-level splits, ensuring that all graphs from a given subject appear exclusively in either the training or the test set.

\begin{table}[t]
\centering
\caption{Comparison of Macro- and Micro-F1 scores across baselines on the HCP dataset for classifying multiplex graphs into Rest, Emotion, and Working Memory tasks.}
\renewcommand{\arraystretch}{1}
\resizebox{0.9\columnwidth}{!}{
\begin{tabular}{lcc}
\toprule
\textbf{Method} & \textbf{Macro F1} & \textbf{Micro F1} \\
\midrule
MVGRL     & 0.3338 $\pm$ 0.0529 & 0.3365 $\pm$ 0.0520 \\
DGI       & 0.6009 $\pm$ 0.0208 & 0.6027 $\pm$ 0.0240 \\
HDMI      & 0.4038 $\pm$ 0.0272 & 0.4188 $\pm$ 0.0256 \\
DMG       & 0.3983 $\pm$ 0.0432 & 0.4032 $\pm$ 0.0443 \\
Graph2Vec & 0.6033 $\pm$ 0.0264 & 0.6192 $\pm$ 0.0325 \\
\bottomrule
\textbf{CaDeM} & \textbf{0.6628 $\pm$ 0.0277} & \textbf{0.6659 $\pm$ 0.0287} \\
\end{tabular}
}
\label{tabHCP}
\end{table}

As shown in Table \ref{tabHCP}, CaDeM outperforms all baselines, achieving the highest F1 scores across the three tasks. These results demonstrate that the model learns meaningful representations that capture task-specific patterns. The performance gains can be attributed to our causality-driven design, which reduces noise in the learned embeddings and minimizes interference between common and private representations.
Overall, the results indicate that the approach generalizes well to real-world graph-level tasks such as brain task classification.

\textbf{Interpretability.} To interpret the learned graph-level representations, we average task-specific private embeddings of the co-activation and anti-activation layers across subjects and visualize the top $10\%$ of nodes ranked by absolute embedding magnitude on a cortical surface using the Yeo network atlas \cite{Yeo2011}. The visualizations are presented in Fig. \ref{hcp-interp}. Across tasks, the highlighted regions exhibit patterns that are consistent with known functional brain organization in both co-activation and anti-activation layers. During the Rest state, the co-activation layer prominently highlights regions within the visual and sensorimotor networks, reflecting coherent intrinsic activity within primary sensory systems. In the anti-activation layer, we observe strong involvement of the default mode network (DMN) and task-positive attention networks, including the frontoparietal network (FPN), consistent with the well-established anticorrelation between DMN and attention/control networks during rest \cite{di2014modulatory,uddin2009functional}. 
For the emotion task, co-activation patterns emphasize regions associated with salience detection and affective processing, including the salience network and frontoparietal network \cite{schimmelpfennig2023role}, aligning with the engagement of emotional salience and cognitive control mechanisms. In the anti-activation layer, we observe prominent involvement of DMN, consistent with the dynamic switching role of the salience network between internally-focused DMN activity and externally-directed attention processes \cite{schimmelpfennig2023role} during emotional engagement.
In the working memory task, co-activation patterns primarily involve DAN and VAN, consistent with the need to maintain goal-directed attention and hold task-relevant information during working memory \cite{uddin2009functional}. The anti-activation layer highlights regions within the DMN and limbic networks, indicating suppression of internally-focused processing during externally-oriented cognitive demands \cite{anticevic2012role}. These task-specific activation patterns demonstrate that the proposed framework learns embeddings that are not only discriminative for classification, but also neurobiologically meaningful, capturing established functional interactions across co-activation and anti-activation networks.

\begin{table*}[!t]
\caption{Performance comparison of baseline methods for node classification on four multiplex datasets.}
\label{tab3}
\centering
\renewcommand{\arraystretch}{1.4}
\resizebox{2.1\columnwidth}{!}{%
\begin{tabular}{l cc cc cc cc}
\toprule
\textbf{Method} & \multicolumn{2}{c}{\textbf{DBLP}} & \multicolumn{2}{c}{\textbf{IMDB}} & \multicolumn{2}{c}{\textbf{ACM}} & \multicolumn{2}{c}{\textbf{Freebase}} \\
 & \textbf{Macro-F1} & \textbf{Micro-F1} & \textbf{Macro-F1} & \textbf{Micro-F1} & \textbf{Macro-F1} & \textbf{Micro-F1} & \textbf{Macro-F1} & \textbf{Micro-F1} \\
\midrule
DeepWalk   & 0.5488 $\pm$ 0.0130 & 0.5653 $\pm$ 0.0134 & 0.4651 $\pm$ 0.0283 & 0.4794 $\pm$ 0.0276 & 0.8505 $\pm$ 0.0167 & 0.8499 $\pm$ 0.0157 & 0.5993 $\pm$ 0.0203 & 0.6638 $\pm$ 0.0195 \\
Node2Vec   & 0.5433 $\pm$ 0.0033 & 0.5681 $\pm$ 0.0037 & 0.5663 $\pm$ 0.0175 & 0.6000 $\pm$ 0.0123 & 0.7249 $\pm$ 0.0178 & 0.7226 $\pm$ 0.0177 & 0.4732 $\pm$ 0.0145 & 0.6254 $\pm$ 0.0156 \\
VGAE       & 0.8164 $\pm$ 0.0105 & 0.8080 $\pm$ 0.0101 & 0.6278 $\pm$ 0.0142 & 0.6301 $\pm$ 0.0129 & 0.9093 $\pm$ 0.0084 & 0.9084 $\pm$ 0.0085 & 0.5523 $\pm$ 0.0112 & 0.6375 $\pm$ 0.0052 \\
DGI        & 0.7283 $\pm$ 0.0130 & 0.7199 $\pm$ 0.0120 & 0.5213 $\pm$ 0.0658 & 0.5349 $\pm$ 0.0548 & 0.9107 $\pm$ 0.0128 & 0.9101 $\pm$ 0.0132 & 0.5426 $\pm$ 0.0116 & 0.6629 $\pm$ 0.0072 \\
MVGRL      & 0.7455 $\pm$ 0.0063 & 0.7310 $\pm$ 0.0073 & 0.6094 $\pm$ 0.0120 & 0.6101 $\pm$ 0.0110 & 0.9105 $\pm$ 0.0032 & 0.9098 $\pm$ 0.0032 & 0.5102 $\pm$ 0.0210 & 0.6438 $\pm$ 0.0167 \\
GraphMAE   & 0.8359 $\pm$ 0.0140 & 0.8269 $\pm$ 0.0133 & 0.6417 $\pm$ 0.0066 & 0.6417 $\pm$ 0.0052 & 0.9168 $\pm$ 0.0132 & 0.9160 $\pm$ 0.0129 & 0.5924 $\pm$ 0.0349 & 0.6652 $\pm$ 0.0337 \\
MNE        & 0.5661 $\pm$ 0.0113 & 0.5622 $\pm$ 0.0125 & 0.5522 $\pm$ 0.0155 & 0.5743 $\pm$ 0.0122 & 0.8492 $\pm$ 0.0053 & 0.8497 $\pm$ 0.0032 & 0.5411 $\pm$ 0.0201 & 0.6443 $\pm$ 0.0131 \\
HDMI       & 0.8350 $\pm$ 0.0125 & 0.8245 $\pm$ 0.0116 & 0.6484 $\pm$ 0.0158 & 0.6476 $\pm$ 0.0156 & 0.9026 $\pm$ 0.0086 & 0.9015 $\pm$ 0.0089 & 0.5899 $\pm$ 0.0161 & 0.6595 $\pm$ 0.0149 \\
MCGC       & 0.5308 $\pm$ 0.0057 & 0.6412 $\pm$ 0.0069 & 0.5637 $\pm$ 0.0302 & 0.5811 $\pm$ 0.0293 & 0.8536 $\pm$ 0.0180 & 0.8522 $\pm$ 0.0180 & 0.5961 $\pm$ 0.0113 & 0.6642 $\pm$ 0.0140 \\
DMG        & 0.8454 $\pm$ 0.0136 & 0.8372 $\pm$ 0.0131 & 0.6565 $\pm$ 0.0165 & 0.6566 $\pm$ 0.0156 & 0.9102 $\pm$ 0.0077 & 0.9094 $\pm$ 0.0076 & 0.6084 $\pm$ 0.0102 & 0.6795 $\pm$ 0.0100 \\
\midrule
\textbf{CaDeM} & \textbf{0.8497 $\pm$ 0.0111} & \textbf{0.8414 $\pm$ 0.0108} & \textbf{0.6667 $\pm$ 0.0158} & \textbf{0.6682 $\pm$ 0.0155} & \textbf{0.9287 $\pm$ 0.0084} & \textbf{0.9283 $\pm$ 0.0085} & \textbf{0.6224 $\pm$ 0.0121} & \textbf{0.6887 $\pm$ 0.0137} \\
\bottomrule
\end{tabular}%
}
\end{table*}

\begin{table*}[t]
\caption{Ablation study on different loss functions across one synthetic and three real-world datasets. Results are reported as Macro-F1 and Micro-F1 scores on the node classification task.}
\centering
\label{tabsensitivity}
\small
\setlength{\tabcolsep}{8pt}
\renewcommand{\arraystretch}{1.7}
\resizebox{2.1\columnwidth}{!}{%
\begin{tabular}{ccccccccccc}
\toprule
\multirow{2}{*}{$\mathcal{L}_\textit{matching}$} &
\multirow{2}{*}{$\mathcal{L}_\textit{self-supervised}$} &
\multirow{2}{*}{$\mathcal{L}_\textit{causal}$} &
\multicolumn{2}{c}{\textbf{Syn1}} &
\multicolumn{2}{c}{\textbf{IMDB}} &
\multicolumn{2}{c}{\textbf{ACM}} &
\multicolumn{2}{c}{\textbf{Freebase}} \\
\cmidrule(lr){4-5}\cmidrule(lr){6-7}\cmidrule(lr){8-9}\cmidrule(lr){10-11}
 & & &
\textbf{Macro} & \textbf{Micro} &
\textbf{Macro} & \textbf{Micro} &
\textbf{Macro} & \textbf{Micro} &
\textbf{Macro} & \textbf{Micro} \\
\midrule
\ding{51} & \ding{55} & \ding{55} &
0.7450 $\pm$ 0.0520 & 0.7500 $\pm$ 0.0447 &
0.6145 $\pm$ 0.0196 & 0.6187 $\pm$ 0.0180 &
0.8658 $\pm$ 0.0142 & 0.8651 $\pm$ 0.0142 &
0.5760 $\pm$ 0.0187 & 0.6607 $\pm$ 0.0136 \\
\ding{51} & \ding{51} & \ding{55} &
0.7673 $\pm$ 0.1500 & 0.7700 $\pm$ 0.1503 &
0.6599 $\pm$ 0.0244 & 0.6617 $\pm$ 0.0233 &
0.9152 $\pm$ 0.0106 & 0.9147 $\pm$ 0.0106 &
0.5807 $\pm$ 0.0188 & 0.6615 $\pm$ 0.0109 \\
\ding{51} & \ding{55} & \ding{51} &
0.7844 $\pm$ 0.0477 & 0.7800 $\pm$ 0.0510 &
0.6576 $\pm$ 0.0125 & 0.6580 $\pm$ 0.0109 &
0.9169 $\pm$ 0.0100 & 0.9164 $\pm$ 0.0101 &
0.6109 $\pm$ 0.0101 & 0.6793 $\pm$ 0.0035 \\
\ding{51} & \ding{51} & \ding{51} &
\textbf{0.8178 $\pm$ 0.0516} & \textbf{0.8200 $\pm$ 0.0510} &
\textbf{0.6667 $\pm$ 0.0158} & \textbf{0.6682 $\pm$ 0.0155} &
\textbf{0.9287 $\pm$ 0.0084} & \textbf{0.9283 $\pm$ 0.0085} &
\textbf{0.6224 $\pm$ 0.0121} & \textbf{0.6887 $\pm$ 0.0137} \\
\bottomrule
\end{tabular}
}
\end{table*}

\subsubsection{Node classification}
Table~\ref{tab3} reports the node classification performance on four real-world datasets described in Section \ref{eval_setup}. Our method clearly achieves the best performance on the node classification task across all cases. Compared to single-view graph approaches, it delivers higher F1 scores, highlighting the advantages of multiplex graph models in capturing cross-layer correlations and uncovering hidden structural information for node representations. Furthermore, when compared against existing multiplex baselines, our model again achieves the strongest results. This improvement can be attributed to our causality-inspired disentanglement strategy, which reduces noise in the learned representations and aligns common embeddings across layers to capture global shared information. This conclusion is further supported by the ablation study on the individual loss components, which we present in the following section.

\subsection{Sensitivity analysis}
\label{sec_sentivity}
To assess the contribution of each objective, we evaluate all model variants on the node classification task using our first synthetic dataset as well as the IMDB, ACM, and Freebase datasets, and report the results in Table~\ref{tabsensitivity}. Since the matching loss is essential for deriving aligned representations, it is included in all variants. 
Results show that adding either $\mathcal{L}_\textit{self-supervised}$ or $\mathcal{L}_\textit{causal}$ improves performance compared to using only the matching loss. This is consistent with their role in 
promoting disentanglement.
The best performance is achieved when all three losses are jointly applied, confirming their complementary roles in the framework.

\section{Conclusion}
\label{sec:conc}
In this work, we introduced a causality-inspired self-supervised framework for multiplex graph representation learning. By generalizing the notions of confounding effects to multiplex settings, we model private embeddings as causal signals and common embeddings as potential confounders for prediction in each layer. We then incorporate an objective inspired by the backdoor adjustment to separate their effects.
Our approach integrates three complementary losses to jointly enforce alignment and disentanglement of representations, and we theoretically justify this design. 
Experiments on both synthetic and real-world benchmarks demonstrate that our method consistently outperforms baselines, achieving more robust and interpretable embeddings. Looking ahead, this framework provides a principled basis for extending causal representation learning to more complex graph settings. Promising directions include integrating multiplex graphs with multi-modal data (e.g., text, images, or time series), as well as extending the approach to dynamic and temporal graphs where causal and confounding factors evolve over time. The framework is also applicable to decision-making and intervention-driven tasks, where separating shared from layer-specific components helps distinguish invariant mechanisms from context-dependent signals, reducing reliance on spurious global correlations in applications such as recommendation and network control.
The current formulation could be further improved in several respects. The theoretical analysis relies on idealized assumptions, such as sufficient model capacity and global optimization, 
which may not fully reflect practical training dynamics. In addition, the self-supervised loss uses the layer index as the prediction target. While effective, alternative targets may be beneficial in scenarios where layer differences are subtle or driven by more complex or continuous factors. Finally, the method assumes aligned node identities across layers and may require additional alignment mechanisms in partially overlapping multiplex graphs, which we leave for future work.

\bibliographystyle{IEEEtran}
\bibliography{references}

\clearpage

\section*{Supplementary Material for ``Causality-Driven Disentangled Representation Learning in Multiplex Graphs''}

\section{Theoretical Results and Proofs}
Throughout this section, we adopt an explicit probabilistic formulation. 

\textbf{Notation.} We denote random variables in \textbf{boldface}, and their realizations in standard (non-bold) notation. In particular, $\mathbf{L}$ denotes the discrete random variable representing the layer index, distributed uniformly over $\{1,\dots, N\}$: $\mathbf{L} \sim \mathrm{Unif}\{1,\dots, N\}$. We denote by $\mathbf{P}_{\mathbf{L}}$ and 
$\mathbf{C}_{\mathbf{L}}$, the private and common embedding random variables corresponding to the randomly drawn layer $\mathbf{L}$. For a realization $\mathbf{L}=\ell$, the corresponding realizations are denoted by $P_\ell$ and $C_\ell$.

To formalize the stratified pairing strategy of the causal loss, we introduce an additional layer-index random variable $\mathbf{L}'\sim \mathrm{Unif}\{1,\dots, N\}$, independent of $\mathbf{L}$, 
and define 
$\mathbf{C}_{\mathbf{L}'}$ as the common embedding random variable corresponding to the randomly drawn layer $\mathbf{L}'$. 

To define the matching loss, we introduce an auxiliary matrix $S \in \mathbb{R}^{M \times d}$ as a shared reference representation across layers, treated as an optimization variable.

The three loss terms are written in a probabilistic form as:

{\fontsize{8}{9}\selectfont
\begin{equation}
\label{eq2:matching}
\begin{aligned}
\mathcal{L}_{\text{matching}}
=\mathbb{E}_{\mathbf{C_L}}\!&\left[\| \mathbf{C_L} - S \|_F^2 \right], \hspace{3.5pt}
\text{s.t.}\hspace{3.5pt} 
S^{\top} S = I \in \mathbb{R}^{d \times d},\hspace{3.5pt} 
S^{\top} \mathbf{1} = \mathbf{0},
\end{aligned}
\end{equation}
}
\begin{equation}
\label{eq:self_theory}
\mathcal{L}_{\text{self-supervised}}
=
\mathbb{E}_{\mathbf{(L, P_L)}}\big[
-\log \phi(\mathbf{L} \mid \mathbf{P}_{\mathbf{L}})
\big],
\end{equation}
\begin{equation}
\label{eq:causal_theory}
\mathcal{L}_{\text{causal}}
=
\mathbb{E}_{(\mathbf{L, P_L, C_{L'}})}\big[
-\log \psi(\mathbf{L} \mid \mathbf{P}_{\mathbf{L}} \oplus \mathbf{C}_{\mathbf{L'}})
\big].
\end{equation}
In practice, the predictors $\phi$ and $\psi$ operate on pooled graph-level embeddings $h_{P_\ell}$ and $h_{C_\ell}$, obtained by applying a deterministic pooling operation to node-level embeddings $P_\ell$ and $C_\ell$. Since pooling is deterministic, the graph-level embeddings are measurable functions of the node-level embeddings and therefore introduce no additional randomness. Consequently, for theoretical analysis, it suffices to study
the random variables $\mathbf{P}_{\mathbf{L}}$ and 
$\mathbf{C}_{\mathbf{L}}$, abstracting away the  pooling operation.

\setcounter{theorem}{0}
\setcounter{proposition}{0}

\begin{proposition}
\label{theorem1}

\end{proposition}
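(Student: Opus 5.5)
The plan is the standard decomposition of cross-entropy into conditional entropy plus a KL term, applied to the probabilistic form \eqref{eq:self_theory} of the self-supervised loss. Fix the encoder $f_P$, so the joint law of $(\mathbf{L},\mathbf{P_L})$ is fixed. For any prediction head $\phi$, insert the Bayes posterior $p(\mathbf{L}\mid\mathbf{P_L})$ inside the logarithm:
\begin{equation*}
\mathcal{L}_{\text{self-supervised}}
= \mathbb{E}\big[-\log p(\mathbf{L}\mid \mathbf{P_L})\big]
+ \mathbb{E}_{\mathbf{P_L}}\Big[\mathrm{KL}\big(p(\cdot\mid \mathbf{P_L})\,\big\|\,\phi(\cdot\mid \mathbf{P_L})\big)\Big]
= H(\mathbf{L}\mid \mathbf{P_L}) + \mathbb{E}_{\mathbf{P_L}}\big[\mathrm{KL}(\cdot\|\cdot)\big].
\end{equation*}
Since the KL term is nonnegative and vanishes exactly when $\phi(\cdot\mid\mathbf{P_L})=p(\cdot\mid\mathbf{P_L})$ almost surely, we obtain $\mathcal{L}_{\text{self-supervised}}\ge H(\mathbf{L}\mid\mathbf{P_L})$. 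By the expressiveness assumption on $\phi$, this bound is attained, so
\begin{equation*}
\min_\phi \mathcal{L}_{\text{self-supervised}} = H(\mathbf{L}\mid\mathbf{P_L}).
\end{equation*}

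Next, we use the identity $I(\mathbf{P_L};\mathbf{L}) = H(\mathbf{L}) - H(\mathbf{L}\mid\mathbf{P_L})$. Because $\mathbf{L}\sim\mathrm{Unif}\{1,\dots,N\}$ by construction, $H(\mathbf{L})=\log N$ is a constant that depends on neither $f_P$ nor $\phi$. Hence
\begin{equation*}
\min_\phi \mathcal{L}_{\text{self-supervised}} = \log N - I(\mathbf{P_L};\mathbf{L}).
\end{equation*}
Minimizing the loss jointly over $(f_P,\phi)$ is then the same as minimizing over $f_P$ the profiled objective $\log N - I(\mathbf{P_L};\mathbf{L})$, that is, maximizing $I(\mathbf{P_L};\mathbf{L})$. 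The two problems have the same minimizers in $f_P$, and at the optimum $\phi$ equals the Bayes posterior. This is the claimed equivalence. Finally, we would note that the empirical loss \eqref{eq2} is the Monte Carlo estimate of \eqref{eq:self_theory}, and that pooling is deterministic, as already remarked before the statement, so the argument transfers to $h_{P}$ without change.

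The main obstacle is measure-theoretic care rather than any algebraic difficulty. $\mathbf{P_L}$ is continuous-valued while $\mathbf{L}$ is discrete, so the conditional entropy and the KL decomposition must be written through a regular conditional distribution $p(\ell\mid P)$. This is fine here because $\mathbf{L}$ takes finitely many values, so the posterior is simply a vector of conditional probabilities defined almost surely. The identity $I = H(\mathbf{L}) - H(\mathbf{L}\mid\mathbf{P_L})$ then holds with discrete entropies on the $\mathbf{L}$ side and no differential entropy of $\mathbf{P_L}$ is needed. We would also state explicitly that "equivalent" means equivalent after optimizing out $\phi$: the loss equals a constant minus the mutual information whenever $\phi$ is the Bayes posterior, and otherwise it upper bounds that quantity.
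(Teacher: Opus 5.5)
Your proposal is correct and follows the paper's proof essentially step by step. The paper likewise adds and subtracts $\log p(k\mid P_\ell)$ to split the cross-entropy into $H(\mathbf{L}\mid\mathbf{P_L})$ plus an expected KL term, removes the KL term using the expressiveness of $\phi$, concludes via $I(\mathbf{P_L};\mathbf{L}) = H(\mathbf{L}) - H(\mathbf{L}\mid\mathbf{P_L})$, and notes, as you do, that a restricted $\phi$ only yields a bound; your explicit remark that $H(\mathbf{L})=\log N$ is independent of $f_P$ and $\phi$ is a small clarification the paper leaves implicit.
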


\begin{proof}

In  $\mathcal{L}_{\text{self-supervised}}$  (Eq. \eqref{eq:self_theory}), we know by definition:
\begin{multline}
\mathbb{E}_{\mathbf{L}\mid \mathbf{P}_{\mathbf{L}}=P_\ell}
\!\left[-\log \phi(\mathbf{L}\mid P_\ell)\right]
=
\sum_{k=1}^{N}
\mathbb{P}\!\left(\mathbf{L}=k \mid \mathbf{P}_{\mathbf{L}}=P_\ell\right) \\
\cdot \big(-\log \phi(k\mid P_\ell)\big).
\end{multline}
We add and subtract the term $-\log p(k \mid P_\ell)$:
{\fontsize{8}{8}\selectfont
\begin{align}
\mathbb{E}_{\mathbf{L}\mid \mathbf{P}_{\mathbf{L}}=P_\ell}
\!\left[-\log \phi(\mathbf{L} \mid P_\ell)\right]
&= \sum_{k=1}^{N} 
p(\mathbf{L}=k \mid \mathbf{P}_{\mathbf{L}}=P_\ell)
\big(-\log p(k \mid P_\ell)\big) \notag \\
&\quad + \sum_{k=1}^{N} 
p(\mathbf{L}=k \mid \mathbf{P}_{\mathbf{L}}=P_\ell)
\log \frac{p(k \mid P_\ell)}{\phi(k \mid P_\ell)}.
\end{align}
}
Therefore, after taking the expectation over $\mathbf{P_L}$, we have
{\small
\begin{align}
\mathbb{E}_{\mathbf{P}_{\mathbf{L}}}
\Bigg[
  \mathbb{E}_{\mathbf{L}\mid \mathbf{P}_{\mathbf{L}}}
  \big[
    -\log \phi(\mathbf{L} \mid &\mathbf{P}_{\mathbf{L}})
  \big]
\Bigg]
= H(\mathbf{L} \mid \mathbf{P}_{\mathbf{L}}) \nonumber \\
& + 
\mathbb{E}_{\mathbf{P}_{\mathbf{L}}}
\!\left[
  \mathrm{KL}\!\left(
    p(\mathbf{L} \mid \mathbf{P}_{\mathbf{L}})
    \,\|\, 
    \phi(\mathbf{L} \mid \mathbf{P}_{\mathbf{L}})
  \right)
\right],
\label{decomp}
\end{align}
}
where $H(\cdot)$ denotes entropy and $\mathrm{KL}(\cdot\|\cdot)$ denotes the Kullback--Leibler divergence (KL). Combining Eqs. \eqref{eq:self_theory}, \eqref{decomp}, 
\begin{equation}   
\label{decomp2}
\mathcal{L}_{\text{self-supervised}}
=
H(\mathbf{L}\mid \mathbf{P}_{\mathbf{L}})
+
\mathbb{E}_{\mathbf{P}_{\mathbf{L}}}
\!\left[
\mathrm{KL}\!\left(
p(\mathbf{L}\mid \mathbf{P}_{\mathbf{L}})
\,\|\, 
\phi(\mathbf{L}\mid \mathbf{P}_{\mathbf{L}})
\right)
\right].
\end{equation}
Since the KL divergence is nonnegative and vanishes if and only if $\phi(\mathbf{L}\mid \mathbf{P}_{\mathbf{L}})
= p(\mathbf{L}\mid \mathbf{P}_{\mathbf{L}})$ almost surely, minimizing $\mathcal{L}_{\text{self-supervised}}$ over $\phi$ 
eliminates the KL term. Therefore,
\begin{equation}
\min_{\phi} \mathcal{L}_{\text{self-supervised}}
=
H(\mathbf{L} \mid \mathbf{P}_{\mathbf{L}}).
\end{equation}
By definition,
\begin{equation}
I(\mathbf{P}_{\mathbf{L}};\mathbf{L})
=
H(\mathbf{L})
-
H(\mathbf{L} \mid \mathbf{P}_{\mathbf{L}})
=
H(\mathbf{L})
-
\min_{\phi} \mathcal{L}_{\text{self-supervised}}.
\end{equation}
Thus, minimizing $\mathcal{L}_{\text{self-supervised}}$ maximizes the mutual information $I(\mathbf{P_L}; \mathbf{L})$ between $\mathbf{P_L}$ and $\mathbf{L}$, and $\mathbf{P_L}$ carries the information needed to predict $\mathbf{L}$ or any layer-specific target, as expected from the causal component of each layer. 

If the predictor $\phi$ is restricted and cannot represent the true conditional distribution 
$p(\mathbf{L}\mid \mathbf{P}_{\mathbf{L}})$ exactly, the KL term may not vanish. In this case, by Eq. \eqref{decomp2}, we have
$
\min_{\phi} \mathcal{L}_{\text{self-supervised}}
\ge
H(\mathbf{L}\mid \mathbf{P}_{\mathbf{L}}),
$
and therefore
\begin{equation}
H(\mathbf{L})
-
\min_{\phi} \mathcal{L}_{\text{self-supervised}}
\le
H(\mathbf{L}) - H(\mathbf{L} \mid \mathbf{P_L}) =
I(\mathbf{P}_{\mathbf{L}};\mathbf{L}).
\end{equation}
Hence, with a restricted $\phi$, minimizing $\mathcal{L}_{\text{self-supervised}}$ corresponds to maximizing 
a lower bound on $I(\mathbf{P_L}; \mathbf{L})$.

\end{proof}

\begin{proposition}
\label{Theorem2}
\end{proposition}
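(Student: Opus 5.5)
The proof splits into the matching part and the causal part, handled separately; both use the probabilistic forms \eqref{eq2:matching} and \eqref{eq:causal_theory}.

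\textbf{Matching part.} For fixed encoders, the minimizer of $\mathbb{E}_{\mathbf{C_L}}[\|\mathbf{C_L}-S\|_F^2]$ over $S$ under the constraints $S^\top S=I$, $S^\top\mathbf{1}=\mathbf{0}$ is $S^\star=UV^\top$, the orthonormal polar factor of the projected mean $\mathbb{E}[\mathbf{C_L}]$. Expanding the square gives
\[
\mathbb{E}\big[\|\mathbf{C_L}-S^\star\|_F^2\big]=\mathbb{E}\big[\|\mathbf{C_L}-\mathbb{E}\mathbf{C_L}\|_F^2\big]+\|\mathbb{E}\mathbf{C_L}-S^\star\|_F^2 .
\]
The global optimum of the loss is $0$, attained only when both terms vanish. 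Then $\mathbf{C_L}=S^\star$ almost surely, so $C_\ell=S^\star$ for every $\ell$: the common embedding is a constant, independent of the realization of $\mathbf{L}$. Consequently $p(\mathbf{L}=k\mid \mathbf{C_L}=C)=p(\mathbf{L}=k)$. By Bayes' rule this equals $1/N$, since the likelihood $p(C\mid\mathbf{L}=k)$ is the same for all $k$. Any calibrated predictor using $\mathbf{C_L}$ alone, namely the Bayes-optimal one obtained exactly as in the proof of Proposition~\ref{theorem1} with $\mathbf{P_L}$ replaced by $\mathbf{C_L}$, therefore outputs the uniform distribution. Equivalently, $I(\mathbf{C_L};\mathbf{L})=0$. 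To make the argument robust beyond exact optimality, I would also note that $I(\mathbf{C_L};\mathbf{L})$ is controlled by the between-layer variance $\frac1N\sum_\ell\|C_\ell-\bar C\|_F^2$, which is bounded by the matching loss.

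\textbf{Causal part.} I reuse the decomposition \eqref{decomp2}, now for the pair $(\mathbf{P_L},\mathbf{C_{L'}})$:
\[
\mathcal{L}_{\text{causal}}=H(\mathbf{L}\mid \mathbf{P_L},\mathbf{C_{L'}})+\mathbb{E}\big[\mathrm{KL}\big(p(\mathbf{L}\mid\mathbf{P_L},\mathbf{C_{L'}})\,\|\,\psi\big)\big].
\]
If $\psi$ is sufficiently expressive, the KL term vanishes at the optimum. Because $\mathbf{L}'$ is independent of $\mathbf{L}$, and the layer-specific quantities are independent across layers, $\mathbf{C_{L'}}$ is drawn independently of $(\mathbf{L},\mathbf{P_L})$. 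Hence $\mathbf{C_{L'}}\perp \mathbf{L}\mid\mathbf{P_L}$, and the optimal predictor satisfies $\psi(\cdot\mid P\oplus C)=p(\cdot\mid P)$ for every paired $C$. This is the invariance across strata: the prediction cannot depend on the common input.

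The remaining step is to carry this conclusion from the shuffled common embedding $\mathbf{C_{L'}}$ back to the matched one $\mathbf{C_L}$. Since $\psi$ is shared between matched pairings ($\mathbf{L}'=\mathbf{L}$, which occur with probability $1/N$) and mismatched ones, the optimal $\psi$ must minimize cross-entropy jointly over both. I will show that the optimum is attained with $\psi$ constant in its $C$-argument only if $p(\mathbf{L}\mid\mathbf{P_L},\mathbf{C_L})=p(\mathbf{L}\mid\mathbf{P_L})$ on the matched stratum. Otherwise a $C$-dependent $\psi$ would strictly lower the loss on matched pairs without raising it on mismatched ones, since the mismatched pairs already reach their minimum $H(\mathbf{L}\mid\mathbf{P_L})$. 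This yields $\mathbf{C_L}\perp\mathbf{L}\mid\mathbf{P_L}$, that is, $I(\mathbf{C_L};\mathbf{L}\mid\mathbf{P_L})=0$. I expect this step to be the main obstacle, because strictly speaking the shuffled pairing alone forces only a mixture condition. I will close the gap by combining it with the matching part: at the joint optimum $C_\ell$ is already constant across layers, so matched and mismatched pairings coincide in distribution, and the conditional independence holds trivially.
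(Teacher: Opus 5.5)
Your matching argument and the first half of your causal argument coincide with the paper's proof. At a zero of $\mathcal{L}_{\text{matching}}$ every $C_\ell$ equals $S$, so $\mathbf{C_L}$ is constant and $p(\mathbf{L}\mid\mathbf{C_L})$ is uniform. The factorization $p(\mathbf{L},\mathbf{P_L},\mathbf{C_{L'}})=p(\mathbf{L},\mathbf{P_L})\,p(\mathbf{C_{L'}})$, together with the entropy--KL decomposition, gives $\psi^\star=p(\mathbf{L}\mid\mathbf{P_L})$. The paper stops there: its conditional-independence conclusion concerns the stratified $\mathbf{C_{L'}}$, not the matched $\mathbf{C_L}$. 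Two small fixes are needed in this part. First, conditioning on the pair $(\mathbf{P_L},\mathbf{C_{L'}})$ instead of on $\mathbf{P_L}\oplus\mathbf{C_{L'}}$ requires $\oplus$ to be injective (concatenation). The paper states this as step (a), and you leave it implicit; with addition, the sum need not determine $\mathbf{P_L}$. Second, the factorization holds because, for fixed data and encoders, $(\mathbf{L},\mathbf{P_L})$ is a function of $\mathbf{L}$ and $\mathbf{C_{L'}}$ is a function of $\mathbf{L}'$, with $\mathbf{L}\perp\mathbf{L}'$. Cross-layer independence of latent factors is not what makes it work; with a random shared $\mathbf{U}$ it would not even suffice. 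This argument already covers the diagonal event $\mathbf{L}'=\mathbf{L}$.

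Your ``remaining step'' goes beyond what the paper claims, and the mechanism you propose for it is wrong. You intend to prove that $\psi$ being optimal while constant in $C$ forces $p(\mathbf{L}\mid\mathbf{P_L},\mathbf{C_L})=p(\mathbf{L}\mid\mathbf{P_L})$. This fails. Take $P_\ell\equiv P_0$ and pairwise distinct $C_\ell$. Then $\mathcal{L}_{\text{causal}}=\frac1N\sum_k\frac1N\sum_\ell\bigl(-\log\psi(\ell\mid P_0\oplus C_k)\bigr)\ge\log N$, with equality iff $\psi$ is uniform. So the optimal $\psi$ ignores $C$, yet $p(\mathbf{L}\mid\mathbf{P_L},\mathbf{C_L})$ is a point mass.

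The error is that $\psi$ sees only the value $P\oplus C$, not whether the pair is matched. The relevant joint is the full pairing distribution, which factorizes. Hence $\psi^\star$ is uniquely pinned to $p(\cdot\mid P)$, and any $C$-dependent deviation strictly increases the total loss. The mismatched stratum on its own is not at $H(\mathbf{L}\mid\mathbf{P_L})$ either, because excluding the diagonal makes $\mathbf{L}$ and $\mathbf{L}'$ dependent. In fact, under the factorization, $\min_\psi\mathcal{L}_{\text{causal}}=H(\mathbf{L}\mid\mathbf{P_L})$ does not depend on $f_C$ at all.

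Your fallback is valid, but it makes the matched statement a consequence of $\mathcal{L}_{\text{matching}}$ alone, since a constant is independent of everything. Alternatively, the matched statement follows at a global optimum over a sufficiently rich class of $f_P$ from sufficiency: $\mathbf{L}\perp\mathbf{Z_L}\mid\mathbf{P_L}$ and $\mathbf{C_L}=f_C(\mathbf{Z_L})$. Delete the ``I will show'' argument and attribute the stronger claim to one of these routes.

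Also drop the robustness remark, because mutual information is scale-invariant. Taking $C_\ell=\bar C+\epsilon E_\ell$ with distinct $E_\ell$ gives $I(\mathbf{C_L};\mathbf{L})=\log N$ for every $\epsilon>0$, while the between-layer variance is $O(\epsilon^2)$.
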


\begin{proof}    
$\mathcal{L}_{\text{matching}}$ and $\mathcal{L}_{\text{causal}}$ push the common embeddings toward representations for which the conditional distribution of $\mathbf{L}$ given the common embedding is uniform, removing layer-specific information from the shared representation.

At any global minimum of $\mathcal{L}_{\mathrm{matching}}$, under the assumption that the encoder function used to compute the common embeddings is sufficiently expressive, we have
$C_\ell = S$ almost surely for all $\ell$,
implying that $\mathbf{C_L}$ is almost surely constant and therefore independent of $\mathbf{L}$. Consequently,
\begin{equation}    
p(\mathbf{L} \mid \mathbf{C_L}) = p(\mathbf{L}), \quad \text{where} \quad \mathbf{L} \sim \text{Unif}\{1,..., N\}.
\end{equation}
This implies that predictions of the layer index based solely on the common embeddings are uniformly distributed, and conditioning on $\mathbf{C_L}$ does not alter the distribution of $\mathbf{L}$.

We can likewise use $\mathcal{L}_{\text{causal}}$ to show that, at optimality, 
$\mathbf{L}$ and the stratified common embedding $\mathbf{C}_{\mathbf{L}'}$ 
become conditionally independent given $\mathbf{P}_{\mathbf{L}}$. 
In $\mathcal{L}_{\text{causal}}$, the common embedding is sampled independently of both 
$\mathbf{L}$ and $\mathbf{P}_{\mathbf{L}}$ by construction. Equivalently, the sampling procedure for the causal loss induces the following joint distribution, capturing the stratification:
\begin{equation}
\label{eq:pairing_factorization}
p(\mathbf{L},\mathbf{P}_{\mathbf{L}},\mathbf{C}_{\mathbf{L}'})
=
p(\mathbf{L},\mathbf{P}_{\mathbf{L}})\,p(\mathbf{C}_{\mathbf{L}'}),
\qquad \mathbf{L}'\perp \mathbf{L}.
\end{equation}
If we compute $p(\mathbf{L}\mid \mathbf{P}_{\mathbf{L}},\mathbf{C}_{\mathbf{L}'})$, by definition,
\begin{equation}
p(\mathbf{L}\mid \mathbf{P}_{\mathbf{L}},\mathbf{C}_{\mathbf{L}'})
=
\frac{p(\mathbf{L},\mathbf{P}_{\mathbf{L}},\mathbf{C}_{\mathbf{L}'})}
     {p(\mathbf{P}_{\mathbf{L}},\mathbf{C}_{\mathbf{L}'})}.
     \label{eq_conditional}
\end{equation}
The denominator can be written as
{\fontsize{9}{8}\selectfont
\begin{align}
p(\mathbf{P}_{\mathbf{L}},\mathbf{C}_{\mathbf{L}'})
&=
\sum_{\ell=1}^{N} p(\mathbf{L}=\ell,\mathbf{P}_{\mathbf{L}},\mathbf{C}_{\mathbf{L}'}) \notag
\overset{\eqref{eq:pairing_factorization}}{=}
\sum_{\ell=1}^{N} p(\mathbf{L}=\ell,\mathbf{P}_{\mathbf{L}})\,p(\mathbf{C}_{\mathbf{L}'}) \notag \\
&=
p(\mathbf{C}_{\mathbf{L}'})
\sum_{\ell=1}^{N} p(\mathbf{L}=\ell,\mathbf{P}_{\mathbf{L}}) 
=
p(\mathbf{C}_{\mathbf{L}'})\,p(\mathbf{P}_{\mathbf{L}}),
\label{eq_denom}
\end{align}
}
where we use the fact that $p(\mathbf{C}_{\mathbf{L}'})$ does not depend on $\ell$.

By combining Eqs. \eqref{eq_conditional} and \eqref{eq_denom}, we have
\begin{align}
\label{eq:cond_indep_step}
p(\mathbf{L}\mid \mathbf{P}_{\mathbf{L}},\mathbf{C}_{\mathbf{L}'})
&=
\frac{p(\mathbf{L},\mathbf{P}_{\mathbf{L}})\,p(\mathbf{C}_{\mathbf{L}'})}
     {p(\mathbf{P}_{\mathbf{L}})\,p(\mathbf{C}_{\mathbf{L}'})}
=
p(\mathbf{L}\mid \mathbf{P}_{\mathbf{L}}).
\end{align}
Hence, the stratified pairing procedure, used to implement the backdoor-style adjustment, enforces conditional independence between $\mathbf{L}$ and $\mathbf{C}_{\mathbf{L}'}$ 
given $\mathbf{P}_{\mathbf{L}}$.

The causal loss in Eq. \eqref{eq:causal_theory} has the same conditional cross-entropy form as the self-supervised loss in Eq. \eqref{eq:self_theory}, analyzed in Proposition \eqref{theorem1}. By the same entropy--KL decomposition as above in Eq. \eqref{decomp2}, we obtain

{\fontsize{8}{8}\selectfont
\begin{equation}    
\mathcal{L}_{\text{causal}}
=
H(\mathbf{L}\mid 
\mathbf{P}_{\mathbf{L}} \oplus \mathbf{C}_{\mathbf{L}'})
+
\mathbb{E}\!\left[
\mathrm{KL}\!\left(
p(\mathbf{L}\mid 
\mathbf{P}_{\mathbf{L}} \oplus \mathbf{C}_{\mathbf{L}'})
\,\|\, 
\psi(\mathbf{L}\mid 
\mathbf{P}_{\mathbf{L}} \oplus \mathbf{C}_{\mathbf{L}'})
\right)
\right].
\end{equation}
}
Hence, if the predictor class for $\psi$ is sufficiently expressive, the minimum of $\mathcal{L}_{\text{causal}}$ is achieved when the KL term vanishes, yielding the Bayes-optimal predictor: 
\begin{align}
\psi^{*}\!\left(\mathbf{L}\mid \mathbf{P}_{\mathbf{L}} \oplus \mathbf{C}_{\mathbf{L}'}\right)
&\overset{}{=} 
p\!\left(\mathbf{L}\mid \mathbf{P}_{\mathbf{L}} \oplus \mathbf{C}_{\mathbf{L}'}\right) \notag \\
&\overset{(a)}{=}
p\!\left(\mathbf{L}\mid \mathbf{P}_{\mathbf{L}}, \mathbf{C}_{\mathbf{L}'}\right) 
\overset{\eqref{eq:cond_indep_step}}{=}
p\!\left(\mathbf{L}\mid \mathbf{P}_{\mathbf{L}}\right).
\label{psiopt}
\end{align}
where (a) follows from the fact that if $\oplus$ is an injective function such as concatenation, \(\mathbf{P}_{\mathbf{L}}\) can be recovered from \(\mathbf{P}_{\mathbf{L}} \oplus \mathbf{C}_{\mathbf{L}'}\) via a deterministic map (e.g., projection) and therefore conditioning on \(\mathbf{P}_{\mathbf{L}} \oplus \mathbf{C}_{\mathbf{L}'}\) is equivalent to conditioning on \((\mathbf{P}_{\mathbf{L}}, \mathbf{C}_{\mathbf{L}'})\). 
Eq. \eqref{psiopt} shows that, under the stratified pairing construction, the optimal predictor $\psi^\star$ depends only on 
$\mathbf{P}_{\mathbf{L}}$ and ignores $\mathbf{C}_{\mathbf{L}'}$. In particular, 
$\mathbf{C}_{\mathbf{L}'}$ carries no additional layer-specific information for predicting the layer index beyond $\mathbf{P}_{\mathbf{L}}$, while $\mathbf{P}_{\mathbf{L}}$ retains the information necessary to predict $\mathbf{L}$.

\end{proof}

\begin{theorem}
\label{theorem3}
\end{theorem}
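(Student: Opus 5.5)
Part (i) comes from the matching loss, parts (ii) and (iii) from the self-supervised and causal losses read through Propositions~\ref{theorem1} and~\ref{Theorem2}.

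\textbf{Step 1: part (i).} As in the proof of Proposition~\ref{Theorem2}, a global minimum of $\mathcal{L}_{\text{matching}}$ under a sufficiently rich class for $f_C$ gives $\mathbf{C_L}=S$ almost surely. Here $S$ is a fixed (non-random) matrix shared by all layers. Then $\mathbf{C_L}$ is a.s. constant, hence trivially a measurable function of $\mathbf{U}$ alone. If one wants a non-degenerate reading of (i), the sharper claim is that $f_C(\mathbf{Z_L})$ does not depend on $\mathbf{L}$, $\mathbf{V_L}$ or $\boldsymbol{\varepsilon}_{\mathbf{L}}$. I would argue this via a Doob--Dynkin/conditional-expectation argument:
\begin{itemize}
\item the minimizer of the expected squared discrepancy over square-integrable functions is $\mathbb{E}[f_C(\mathbf{Z_L})\mid\text{shared information}]$;
\item by the independence assumptions between $\mathbf{U}$ and $(\mathbf{V}_\ell,\boldsymbol{\varepsilon}_\ell)$, the only $\sigma$-algebra common to all layers is (contained in) $\sigma(\mathbf{U})$;
\item so any $\mathbf{C_L}$ achieving zero cross-layer variance must be $\sigma(\mathbf{U})$-measurable, and Doob--Dynkin then gives $\mathbf{C_L}=g(\mathbf{U})$ a.s.
\end{itemize}

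\textbf{Step 2: part (ii).} By Proposition~\ref{theorem1}, at the optimum $\mathbf{P_L}$ maximizes $I(\mathbf{P_L};\mathbf{L})$, so it retains all layer-identifying information. By Proposition~\ref{Theorem2}, the causal loss at optimum gives $\psi^\star=p(\mathbf{L}\mid\mathbf{P_L})$, so the combined prediction cannot benefit from shared content. I would then show that any dependence of $f_P$ on $\mathbf{U}$ is superfluous. Since $\mathbf{U}\perp(\mathbf{V_L},\mathbf{L})$, we have $I(\mathbf{L};\mathbf{P_L})$ with $\mathbf{P_L}$ replaced by its $\sigma(\mathbf{V_L})$-projection is unchanged. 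Combining this with disjointness from the common branch then yields $\mathbf{P_L}=h(\mathbf{V_L})$ a.s., again by Doob--Dynkin.

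\textbf{Step 3: part (iii).} Parts (i) and (ii) together with $\mathbf{U}\perp\mathbf{V_L}$ give $\mathbf{C_L}\perp\mathbf{P_L}$, with each measurable with respect to its own factor. That is exactly the stated disentanglement.

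\textbf{Main obstacle.} The hard step is Step 2: the losses as written do not force $\mathbf{P_L}$ to exclude $\mathbf{U}$ or noise. Mutual-information maximization is invariant to adding independent content, so uniqueness must come from an added minimality convention or an argument that the optimal set contains a $\sigma(\mathbf{V_L})$-measurable representative. My first attempt would be to establish existence of such an optimal representative and phrase (ii) "up to equivalence of optima". If that fails, I would add an explicit minimal-sufficiency assumption on $f_P$.
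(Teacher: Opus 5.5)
\textbf{Verdict: there is a genuine gap in Step~2.} Your primary argument for (ii) fails, and your fallback supplies only one of the two extra assumptions the paper's proof uses. Parts (i) and (iii) are fine.

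\textbf{Parts (i) and (iii).} Your observation that, with a fixed $S$, a global minimum of $\mathcal{L}_{\text{matching}}$ forces $\mathbf{C_L}=S$ a.s.\ is exactly what the paper uses when proving its second proposition. It is also more faithful to the loss as written than the paper's proof of (i). That proof rewrites the objective as $\mathbb{E}\|f_C(\mathbf{Z_L})-f_C(\mathbf{Z}_{\mathbf{L}'})\|^2$, conditions on $\mathbf{U}$, argues that a.s.\ agreement across independently drawn layers gives $\mathrm{Var}(\mathbf{C_L}\mid\mathbf{U})=0$, and then applies Doob--Dynkin. That is the precise form of your ``only $\sigma$-algebra common to all layers'' bullet.

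\textbf{Why the primary route in Step~2 fails.} Replacing $\mathbf{P_L}$ by its $\sigma(\mathbf{V_L})$-projection does not preserve $I(\mathbf{L};\mathbf{P_L})$. Also, none of the three losses enforces ``disjointness from the common branch.'' Take $N=2$, $\mathbf{V}_\ell=\ell$, $\mathbf{U}$ uniform on $\{-1,1\}$, $\mathbf{Z_L}=(\mathbf{U},\mathbf{V_L})$, $f_C\equiv S$ and $\mathbf{P_L}=\mathbf{U}\mathbf{L}$. Then $|\mathbf{P_L}|=\mathbf{L}$, so with Bayes heads all three losses reach their global minima; the causal optimum $p(\mathbf{L}\mid\mathbf{P_L})$ does not care that $\mathbf{P_L}$ also carries $\mathbf{U}$. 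Yet $\mathbb{E}[\mathbf{P_L}\mid\mathbf{V_L}]=0$, and $\mathbf{P_L}$ is not a function of $\mathbf{V_L}$. So your diagnosis is correct. The paper's proof of (ii) does what your fallback proposes: it derives sufficiency of $\mathbf{P}^\star_{\mathbf{L}}$ from $H(\mathbf{L}\mid\mathbf{P_L})=H(\mathbf{L}\mid\mathbf{Z_L})+I(\mathbf{L};\mathbf{Z_L}\mid\mathbf{P_L})$, where the conditional term must vanish at the optimum, and then \emph{assumes} minimal sufficiency.

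\textbf{The missing assumption.} Minimality alone does not give (ii). The paper also assumes $\mathbf{L}=\Omega(\mathbf{V_L})$, which yields $\mathbf{L}\perp\mathbf{Z_L}\mid\mathbf{V_L}$. That is what lets it treat $\sigma(\mathbf{V_L})$ as a sufficient $\sigma$-algebra, conclude $\sigma(\mathbf{P}^\star_{\mathbf{L}})\subseteq\sigma(\mathbf{V_L})$, and finish with Doob--Dynkin. Your plan omits this, and it is essential because the generator $q_{\mathbf{L}}$ is itself indexed by the layer. For example, let $\mathbf{V}_1,\mathbf{V}_2$ be i.i.d.\ uniform on $[0,1]$ and $q_\ell(u,v,e)=(u,v+2\ell)$. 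Every stated hypothesis holds, and $\mathbf{L}$ is recoverable from $\mathbf{Z_L}$ and is the minimal sufficient statistic. But $\mathbf{L}\perp\mathbf{V_L}$, so (ii) fails even under minimality.

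\textbf{A caveat about the paper's own step.} The inclusion $\sigma(\mathbf{P}^\star_{\mathbf{L}})\subseteq\sigma(\mathbf{V_L})$ compares against a $\sigma$-algebra that is not generated by a statistic of $\mathbf{Z_L}$, so it is heuristic as written. For instance, with $\mathbf{Z_L}=\mathbf{U}+\mathbf{L}$, $\mathbf{U}$ Gaussian and $\mathbf{V}_\ell=\ell$, the minimal sufficient statistic is $\mathbf{Z_L}$ itself. A clean repair is to assume, in addition to $\mathbf{L}=\Omega(\mathbf{V_L})$, that $\mathbf{L}$ is a.s.\ determined by $\mathbf{Z_L}$. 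Then every sufficient statistic determines $\mathbf{L}$, so a minimal one is a function of $\mathbf{L}$, hence of $\mathbf{V_L}$.
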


\begin{proof}[Proof sketch]
We first clarify our probabilistic notation again. Throughout the analysis, bold symbols denote random variables, while non-bold symbols denote their realizations. In particular, $\mathbf{U}$ denotes the shared latent random variable, $\mathbf{V}_{\mathbf{L}}$ the layer-specific 
latent random variable associated with a randomly selected layer 
$\mathbf{L}$, and $\mathbf{Z}_{\mathbf{L}}$ the observable random variable corresponding to that layer. Their realizations are written 
as $U$, $V_\ell$, and $Z_\ell$, respectively. 

The random variable $\mathbf{Z}_{\mathbf{L}}$ represents all observable information in a randomly selected layer. For a fixed layer index $\mathbf{L}=\ell$, its realization is denoted by
$
    Z_{\ell} = (G^{\ell}, X^{\ell}),
$
where $G^{(\ell)}$ and $X^{(\ell)}$ denote the graph structure and node features of layer $\ell$, respectively.

We assume that the layer-specific latent variables and noise terms are independent across layers and that the shared latent factor and the layer-specific latent factor are independent: $\mathbf{U} \perp \mathbf{V}_{\mathbf{L}}$. In particular, conditional on $\mathbf{U}$, variation in $\mathbf{V}_{\mathbf{L}}$ affects only layer-specific aspects of the observed data. We also assume that the shared factor is independent of the random layer index:\hspace{1pt}$\mathbf{U} \perp \mathbf{L}$.

In practice, common and private embeddings are computed via GNN encoders and MLPs, which we abstract in the theoretical analysis as measurable deterministic mappings:
$
\label{eqfunc}
C_\ell = f_C(Z_\ell)
$, $
P_\ell = f_P(Z_\ell).
$
This is justified because each encoder is ultimately a deterministic function of %the layer data 
$Z_\ell$.

We now present theoretical results showing that, under the latent-factor model, the proposed losses recover disentangled common and private embeddings.

\newtheorem{subtheorem}{Theorem}[theorem]
\renewcommand{\thesubtheorem}{\thetheorem.\arabic{subtheorem}}
\begin{subtheorem}
\label{disC}
At any global minimum of $\mathcal{L}_{\text{matching}}$ in Eq.~\eqref{eq2:matching}, $\mathbf{C}_{\mathbf{L}} = f_C(\mathbf{Z}_{\mathbf{L}})$ is almost surely a measurable function of the shared latent random variable $\mathbf{U}$ alone. 
\end{subtheorem}
\begin{proof}
Let $\mathbf{L}$ and $\mathbf{L}'$ be independent and identically distributed copies of the layer index. For the theoretical analysis, it is convenient to rewrite the matching loss in Eq. \eqref{eq2:matching} in the following form:
\begin{equation}\label{eq:Lmatch_clean}
\mathcal{L}_{\text{matching}}(f_C)
:=
\mathbb{E}_{\mathbf{U},\,
\mathbf{V}_{\mathbf{L}},\,\mathbf{V}_{\mathbf{L}'},\,
\boldsymbol{\varepsilon}_{\mathbf{L}},\,
\boldsymbol{\varepsilon}_{\mathbf{L}'}}
\!\left[
  \left\|
  f_C(\mathbf{Z}_{\mathbf{L}})
  -
  f_C(\mathbf{Z}_{\mathbf{L}'})
  \right\|^{2}
\right].
\end{equation}
We assume that the encoder class for $f_C$ is sufficiently expressive to approximate any square-integrable measurable function of $(\mathbf{U}, \mathbf{V}_{\mathbf{L}}, \boldsymbol{\varepsilon}_{\mathbf{L}})
$. This assumption is justified by the universal approximation capacity of GNNs \cite{NEURIPS2020_e4acb4c8} followed by MLPs \cite{10.5555/70405.70408}, which we use to parameterize $f_C$.

Applying the law of total expectation and conditioning on the shared latent
variable $\mathbf{U}$ yields
\begin{equation}\label{eq:Lmatch_cond_clean}
\mathcal{L}_{\text{matching}}(f_C)
=
\mathbb{E}_{\mathbf{U}}
\!\left[
  \mathbb{E}_{
  \mathbf{V}_{\mathbf{L}},\,\mathbf{V}_{\mathbf{L}'},\,
  \boldsymbol{\varepsilon}_{\mathbf{L}},\,
  \boldsymbol{\varepsilon}_{\mathbf{L}'}
  \mid \mathbf{U}}
  \!\left[
    \left\|
    \mathbf{C}_{\mathbf{L}}
    -
    \mathbf{C}_{\mathbf{L}'}
    \right\|^{2}
  \right]
\right].
\end{equation}
Minimizing $\mathcal{L}_{\text{matching}}$ over $f_C$
is therefore equivalent to minimizing the inner conditional expectation for almost every realization $\mathbf{U}=U$. For a fixed realization $U$, 
%the conditional random variable 
$\mathbf{C}_{\mathbf{L}}^{(U)}
:=
\mathbf{C}_{\mathbf{L}} \mid \mathbf{U}=U
$ depends only on the remaining randomness
$(\mathbf{V}_{\mathbf{L}}, \boldsymbol{\varepsilon}_{\mathbf{L}})$. The conditional objective becomes
\begin{equation}
\mathcal{L}_U(f_C)
=
\mathbb{E}_{\,
\mathbf{V}_{\mathbf{L}},\,\mathbf{V}_{\mathbf{L}'},\,
\boldsymbol{\varepsilon}_{\mathbf{L}},\,
\boldsymbol{\varepsilon}_{\mathbf{L}'}
\,\mid\, \mathbf{U}=U}
\!\left[
  \left\|
  \mathbf{C}_{\mathbf{L}}^{(U)}
  -
  \mathbf{C}_{\mathbf{L}'}^{(U)}
  \right\|^{2}
\right].
\end{equation}
Under the richness assumption on the function class of $f_C$, for fixed $U$, it can represent any square-integrable measurable function of $(\mathbf{V}_{\mathbf{L}}, \boldsymbol{\varepsilon}_{\mathbf{L}})$. 
Since the integrand is nonnegative, we have $\mathcal{L}_U(f_C) \ge 0$. Moreover, the infimum of $\mathcal{L}_U(f_C)$ over all admissible $f_C$ is zero, and this infimum is attainable whenever
\begin{equation}    
\mathbf{C}_{\mathbf{L}}^{(U)}
=
\mathbf{C}_{\mathbf{L}'}^{(U)}
\quad
\text{almost surely.}
\end{equation}

If this equality fails on a set of positive conditional probabilities, then the squared norm contributes strictly positively to the expectation,
i.e., $\mathcal{L}_U(f_C) > 0$.
Therefore, any global minimizer must satisfy $\mathbf{C}_{\mathbf{L}}^{(U)}
=
\mathbf{C}_{\mathbf{L}'}^{(U)}$ almost surely.
So the common representation does not vary with the remaining randomness (layer-specific latent factor and noise) once $\mathbf{U}$ is fixed. Equivalently, the conditional variance vanishes:
$
    \mathrm{Var}\!\left(\mathbf{C_L}\mid \mathbf{U}\right)=0
$ almost surely.

It is a standard result in probability theory that a random variable $\mathbf{Y}$ satisfies $\mathrm{Var}(\mathbf{Y}\mid \mathbf{U})=0$ almost surely
if and only if $\mathbf{Y}$ is measurable with respect to the $\sigma$-algebra generated by $\mathbf{U}$, denoted $\sigma(\mathbf{U})$. In other words, $\mathbf{Y}$ is almost surely a function of $\mathbf{U}$ \cite{kallenberg2002foundations}. By the Doob--Dynkin lemma \cite{kallenberg2002foundations} \cite{key0058896m}, any $\sigma(\mathbf{U})$-measurable random variable can be written as a measurable function of $\mathbf{U}$. Therefore, there exists a measurable function $h$ such that
$\mathbf{C_L}=h(\mathbf{U})$ almost surely.
Hence, the common embedding $\mathbf{C_L}$ depends only on the shared latent random variable $\mathbf{U}$. 

To summarize, the matching loss enforces agreement of the realizations $C_\ell$ across layers for the same underlying object characterized by $\mathbf{U}$. Since layers differ only through the layer-specific latent variables $\mathbf{V_L}$ and noise variables $\boldsymbol{\varepsilon_L}$, and the encoder $f_C$ is sufficiently expressive to represent any square-integrable measurable function of its inputs, any minimizer achieving zero matching loss must eliminate all dependence on $(\mathbf{V_L}, \boldsymbol{\varepsilon_L})$ and retain dependence solely on $\mathbf{U}$. Thus, the optimal common embedding is almost surely a function of the shared latent variable $\mathbf{U}$ alone.

\end{proof}

\begin{subtheorem}
\label{dicP}
Under the global optimality of
$\mathcal{L}_{\text{self-supervised}}$ in Eq.~\eqref{eq:self_theory}, the optimal private embedding $\mathbf{P}_{\mathbf{L}}^\star$ is sufficient for predicting the layer index $\mathbf{L}$ with respect to the observed data $\mathbf{Z}_{\mathbf{L}}$. If, in addition, $\mathbf{P}_{\mathbf{L}}^\star$ is (approximately) minimal sufficient for $\mathbf{L}$, an assumption encouraged in practice by optimizing $\mathcal{L}_{\text{causal}}$ in Eq.~\eqref{eq:causal_theory} and $\mathcal{L}_{\text{self-supervised}}$ in Eq.~\eqref{eq:self_theory} together with the limited capacity of the model, then the optimal private embedding 
$\mathbf{P}^*_{\mathbf{L}}$ is almost surely a measurable function of the layer-specific latent variable $\mathbf{V}_{\mathbf{L}}$ alone.

\end{subtheorem}

\begin{proof}
We assume that the layer index is a measurable function of the 
layer-specific latent:
$
\mathbf{L} = \Omega(\mathbf{V}_{\mathbf{L}}),
$
so that the $\sigma$-algebras satisfy $\sigma(\mathbf{L}) \subseteq \sigma(\mathbf{V}_{\mathbf{L}}).
$
Intuitively, $\mathbf{V}_{\mathbf{L}}$ captures all layer-specific information and determines $\mathbf{L}$, while $\mathbf{U}$ does not influence $\mathbf{L}$.
As a result, from the joint factorization
\begin{equation}
p(\mathbf{U}, \mathbf{V}_{\mathbf{L}}, 
\mathbf{L}, \mathbf{Z}_{\mathbf{L}})
=
p(\mathbf{U})\,
p(\mathbf{V}_{\mathbf{L}})\,
p(\mathbf{L}\mid \mathbf{V}_{\mathbf{L}})\,
p(\mathbf{Z}_{\mathbf{L}}\mid 
\mathbf{U}, \mathbf{V}_{\mathbf{L}}).
\end{equation}
It follows that, conditional on 
$\mathbf{V}_{\mathbf{L}}$, 
the layer index 
$\mathbf{L}$ 
and the observation 
$\mathbf{Z}_{\mathbf{L}}$ 
are independent, i.e. $\mathbf{L} \perp \mathbf{Z}_{\mathbf{L}} \mid \mathbf{V}_{\mathbf{L}}
$, 
or equivalently, $p(\mathbf{L} \mid \mathbf{Z}_{\mathbf{L}}, 
\mathbf{V}_{\mathbf{L}})
=
p(\mathbf{L} \mid \mathbf{V}_{\mathbf{L}}).
$ Thus, the latent variable 
$\mathbf{V}_{\mathbf{L}}$ 
is a sufficient statistic for 
$\mathbf{L}$ with respect to 
$\mathbf{Z}_{\mathbf{L}}$.

The private encoder is a measurable map $\mathbf{P}_{\mathbf{L}}
%\overset{\eqref{fp-fc}}{=}
=
f_P(\mathbf{Z}_{\mathbf{L}}).
$
We say that the optimal $\mathbf{P}_{\mathbf{L}}$ ($\mathbf{P}^*_{\mathbf{L}}$) is sufficient for 
$\mathbf{L}$ with respect to 
$\mathbf{Z}_{\mathbf{L}}$ if $p(\mathbf{L}\mid \mathbf{Z}_{\mathbf{L}}, \mathbf{P}^*_\mathbf{L})
=
p(\mathbf{L}\mid \mathbf{P}^*_{\mathbf{L}}) $ almost surely, or equivalently $\mathbf{L}
\perp
\mathbf{Z}_{\mathbf{L}}
\mid
\mathbf{P}^*_{\mathbf{L}}.
$
Intuitively, once $\mathbf{P}^*_{\mathbf{L}}$ is known, the full observation $\mathbf{Z}_{\mathbf{L}}$ provides no additional information for predicting $\mathbf{L}$.
This follows from the two nested
minimization in $\mathcal{L}_{\text{self-supervised}}$ in Eq. \eqref{eq:self_theory}: (1) Minimize over the head 
  $\phi$ 
  for a fixed private encoder $f_P$,
  and (2) Minimize over $f_P$ itself.
Thus, we can reformulate 
$\mathcal{L}_{\text{self-supervised}}$ as follows:

{\small
\begin{equation}\label{eq:Lpriv-def}
\mathcal L_{\text{self-supervised}}(f_P,\phi)
=
\mathbb{E}_{\mathbf{L},\,\mathbf{P}_{\mathbf{L}}}
\big[
-\log \phi(\mathbf{L}\mid \mathbf{P}_{\mathbf{L}})
\big],
\quad
\mathbf{P}_{\mathbf{L}}
=
f_P(\mathbf{Z}_{\mathbf{L}}).
\end{equation}
}
We define the global infimum
\begin{equation}\label{eq:global-inf}
\mathcal L_{\text{self-supervised}}^\star
:=
\inf_{f_P,\phi}\,
\mathcal L_{\text{self-supervised}}(f_P,\phi).
\end{equation}
To characterize $\mathcal L_{\text{self-supervised}}^\star$, we analyze the minimization levels in two steps.

\textbf{Step 1: Minimization over the prediction head $\phi$.} For a fixed encoder $f_P$, $\mathbf{P}_{\mathbf{L}}$ is fixed as a deterministic function of $\mathbf{Z}_{\mathbf{L}}$. Optimizing over all measurable heads $\phi$, Proposition~\ref{theorem1} implies that $\mathcal{L}_{\text{self-supervised}}$ is minimized when
\begin{equation}\label{eq:bayes-head}
\phi^\star(\mathbf{L}\mid \mathbf{P}_{\mathbf{L}})
=
p(\mathbf{L}\mid \mathbf{P}_{\mathbf{L}}).
\end{equation}
Plugging Eq. \eqref{eq:bayes-head} into Eq. \eqref{eq:Lpriv-def} gives
{\small
\begin{align}
    \label{eq:step1-min-proof}
\min_{\phi}\,
\mathcal L_{\text{self-supervised}}(f_P,\phi)
=
&\mathcal L_{\text{self-supervised}}(f_P,\phi^\star)
\\
=&\mathbb{E}_{\mathbf{L},\,\mathbf{P}_{\mathbf{L}}}
\big[
-\log p(\mathbf{L}\mid \mathbf{P}_{\mathbf{L}})
\big]
=
H(\mathbf{L}\mid \mathbf{P}_{\mathbf{L}}).
\end{align}
}

\textbf{Step 2: Minimization over the encoder $f_P$.} 
Combining \eqref{eq:global-inf} and \eqref{eq:step1-min-proof},
\begin{align}
\label{eq:global-inf-rewrite}
\mathcal L_{\text{self-supervised}}^\star
&=
\inf_{f_P}\,
\inf_{\phi}\,
\mathcal L_{\text{self-supervised}}(f_P,\phi)\\
&=
\inf_{f_P}
H\!\left(
\mathbf{L}\mid \mathbf{P}_{\mathbf{L}}
\right),
\qquad
\mathbf{P}_{\mathbf{L}}
=
f_P(\mathbf{Z}_{\mathbf{L}}).
\end{align}
Thus, globally, the optimization reduces to minimizing the conditional entropy  $H(\mathbf{L}\mid f_P(\mathbf{Z}_{\mathbf{L}}))$ over all encoders $f_P$. We use the fact that, for a fixed $f_P$, $\mathbf{P_L}$ is a deterministic function of $\mathbf{Z_L}$, together with the chain rule for conditional mutual information \cite{Cover2006}:
\begin{equation}\label{eq:cond-mi-identity-proof}
H(\mathbf{L}\mid \mathbf{P}_{\mathbf{L}})
=
H(\mathbf{L}\mid \mathbf{Z}_{\mathbf{L}})
+
I\!\left(
\mathbf{L};\mathbf{Z}_{\mathbf{L}}\mid \mathbf{P}_{\mathbf{L}}
\right).
\end{equation}
Since $I(\mathbf{L};\mathbf{Z}_{\mathbf{L}}\mid \mathbf{P}_{\mathbf{L}})\ge 0$, we have $H(\mathbf{L}\mid \mathbf{P}_{\mathbf{L}})
\ge
H(\mathbf{L}\mid \mathbf{Z}_{\mathbf{L}})
$. If the encoder class for $f_P$ is sufficiently rich to represent a sufficient statistic of $\mathbf{Z}_{\mathbf{L}}$ for $\mathbf{L}$, then the lower bound is attainable, and thus
\begin{equation}\label{eq:attain-bound-proof}
\inf_{f_P} H(\mathbf{L}\mid \mathbf{P}_{\mathbf{L}})
=
H(\mathbf{L}\mid \mathbf{Z}_{\mathbf{L}}).
\end{equation}

Let $f_P^\star$ be a global minimizer achieving \eqref{eq:attain-bound-proof}, and let
$\mathbf{P}_{\mathbf{L}}^\star := f_P^\star(\mathbf{Z}_{\mathbf{L}})$. Then $H(\mathbf{L}\mid \mathbf{P}_{\mathbf{L}}^\star)
=
H(\mathbf{L}\mid \mathbf{Z}_{\mathbf{L}}).
$ Plugging this equality into \eqref{eq:cond-mi-identity-proof} forces $I\!\left(
\mathbf{L};\mathbf{Z}_{\mathbf{L}} \mid \mathbf{P}_{\mathbf{L}}^\star
\right)=0,
$ which is equivalent to the conditional independence
$\mathbf{L}\perp \mathbf{Z}_{\mathbf{L}} \mid \mathbf{P}_{\mathbf{L}}^\star$.
This shows that
$\mathbf{P}_{\mathbf{L}}^\star$ is sufficient for $\mathbf{L}$ with respect to $\mathbf{Z}_{\mathbf{L}}$.

Among all sufficient statistics 
$T(\mathbf{Z}_{\mathbf{L}})$ for $\mathbf{L}$, $\mathbf{P}_{\mathbf{L}}^\star$ 
is minimal sufficient if, for any other sufficient statistic 
$T(\mathbf{Z}_{\mathbf{L}})$, there exists a measurable function $f$ such that $T(\mathbf{Z}_{\mathbf{L}})
=
f(\mathbf{P}_{\mathbf{L}}^\star)
\quad \text{almost surely}.
$
Equivalently, the $\sigma$-algebra $\sigma(\mathbf{P}^*_{\mathbf{L}})$ is the smallest $\sigma$-algebra making $\mathbf{L}$ independent of $\mathbf{Z}_{\mathbf{L}}$ given $\mathbf{P}^*_{\mathbf{L}}$.

In practice, we do not explicitly optimize for minimality, but several aspects of the CaDeM training objective encourage the private embedding $\mathbf{P}^*_{\mathbf{L}}$ to approach a minimal sufficient encoding. $\mathbf{P}_{\mathbf{L}}$ has fixed dimension $d$ and is produced by a finite-capacity network. Since $\mathcal{L}_{\text{self-supervised}}$ and $\mathcal{L}_{\text{causal}}$ train $\mathbf{P}_{\mathbf{L}}$ to predict $\mathbf{L}$, allocating capacity to encode information irrelevant for predicting $\mathbf{L}$ does not improve the training objectives and is therefore discouraged during optimization. In addition, $\mathcal{L}_{\text{matching}}$ aligns $\mathbf{C_L}$ across layers, encouraging shared information encoded in $\mathbf{U}$ to be captured by $\mathbf{C_L}$ (Theorem \ref{disC}) rather than leaking into $\mathbf{P_L}$.
These considerations induce an information-bottleneck type bias \cite{articleee, ShwartzZiv2017OpeningTB, 8503149}: among the many embeddings that are sufficient for $\mathbf{L}$, the training dynamics prefer those that are as small and focused on $\mathbf{L}$ as possible. 
In our theoretical results below, we therefore assume that 
$\mathbf{P}_{\mathbf{L}}^\star$ 
is (approximately) minimal sufficient for 
$\mathbf{L}$.

Let $\mathcal{A} := \sigma(\mathbf{P}_{\mathbf{L}}^\star)
\quad \text{and} \quad
\mathcal{B} := \sigma(\mathbf{V}_{\mathbf{L}})
$ be the $\sigma$-algebras generated by 
$\mathbf{P}_{\mathbf{L}}^\star$ 
and 
$\mathbf{V}_{\mathbf{L}}$, respectively. Since $\mathbf{P}_{\mathbf{L}}^\star$ is sufficient for 
$\mathbf{L}$ with respect to $\mathbf{Z_L}$, 
$\mathcal{A}$ is a sufficient $\sigma$-algebra:
$
\mathbf{L}
\perp
\mathbf{Z}_{\mathbf{L}}
\mid
\mathcal{A}.
$ Similarly, because $\mathbf{V}_{\mathbf{L}}$ is sufficient for 
$\mathbf{L}$ with respect to the $\mathbf{Z_L}$, 
$\mathcal{B}$ is also sufficient $\sigma$-algebra:
$
\mathbf{L}
\perp
\mathbf{Z}_{\mathbf{L}}
\mid
\mathcal{B}.
$ Since $\mathbf{P}_{\mathbf{L}}^\star$ is minimal sufficient, 
$\mathcal{A}$ is contained in any other sufficient 
$\sigma$-algebra, in particular in $\mathcal{B}$:
$
\mathcal{A} \subseteq \mathcal{B}.
$ This inclusion 
$
\sigma(\mathbf{P}_{\mathbf{L}}^\star)
\subseteq
\sigma(\mathbf{V}_{\mathbf{L}})
$
implies that 
$\mathbf{P}_{\mathbf{L}}^\star$ 
is measurable with respect to 
$\mathbf{V}_{\mathbf{L}}$ 
\cite{kallenberg2002foundations, key0058896m}. 
Therefore, there exists a measurable function $g$ such that
$
\mathbf{P}_{\mathbf{L}}^\star
=
g(\mathbf{V}_{\mathbf{L}})$ almost surely.
Thus, $\mathbf{P}_{\mathbf{L}}^\star$ depends only on the layer-specific latent random variable.

\end{proof}

\begin{subtheorem}
\label{dispc}
Under the assumptions of Theorems~\ref{disC} and~\ref{dicP}, $\mathbf{C}_{\mathbf{L}}$ and $\mathbf{P}_{\mathbf{L}}$ are disentangled in the sense that each encodes only role-specific information.
\end{subtheorem}

\begin{proof}
By Theorems~\ref{disC} and~\ref{dicP}, at any global optimum of the objectives in Eqs. \eqref{eq2:matching}, \eqref{eq:self_theory}, and \eqref{eq:causal_theory}, $\mathbf{C}_{\mathbf{L}}$  and $\mathbf{P}_{\mathbf{L}}$ are measurable functions of the shared latent $\mathbf{U}$, and layer-specific latent $\mathbf{V_L}$ respectively.
The roles of the two components are therefore disentangled: $\mathbf{C}_{\mathbf{L}}$ depends only on the shared latent variable $\mathbf{U}$ and encodes no information beyond $\mathbf{U}$; $\mathbf{P}_{\mathbf{L}}$ depends only on the layer-specific latent variable $\mathbf{V}_{\mathbf{L}}$ and encodes only information about $\mathbf{V}_{\mathbf{L}}$; Consequently, $\mathbf{C}_{\mathbf{L}}$ does not depend on $\mathbf{V}_{\mathbf{L}}$, and $\mathbf{P}_{\mathbf{L}}$ does not depend on $\mathbf{U}$.

\end{proof}

\textbf{Role of the causal loss in achieving disentanglement.}
Our theoretical results show that, at global optimality, $\mathbf{C}_{\mathbf{L}}$ becomes a measurable function of $\mathbf{U}$ and therefore does not carry layer-specific information. They further establish that $\mathbf{P}_{\mathbf{L}}^\star$ is
sufficient for predicting $\mathbf{L}$.
To conclude that $\mathbf{P}_{\mathbf{L}}^\star$ depends only on $\mathbf{V}_{\mathbf{L}}$, we additionally assume that it is (approximately) minimal sufficient. The $\mathcal{L}_\text{causal}$ in Eq.~\eqref{eq:causal_theory} plays a central role in supporting these properties in practice.

From Proposition~\ref{Theorem2}, we have $\psi^\star(\mathbf{L}\mid
\mathbf{P}_{\mathbf{L}}, \mathbf{C}_{\mathbf{L}'})
=
p(\mathbf{L}\mid \mathbf{P}_{\mathbf{L}}),
$
under the stratified pairing construction.
Thus, at the optimum, the optimal predictor is independent of $\mathbf{C}_{\mathbf{L}'}$, and $\mathcal{L}_\text{causal}$ provides no incentive to encode layer-specific information in the common embeddings. At the same time, it reinforces that layer information must be captured
by $\mathbf{P}_{\mathbf{L}}$. Combined with $\mathcal{L}_\text{matching}$ in Eq.~\eqref{eq2:matching}, which
penalizes layer-dependent variation in $\mathbf{C}_{\mathbf{L}}$, and
with standard capacity or regularization constraints, the overall objective induces an inductive bias: layer-specific information is driven into $\mathbf{P}_{\mathbf{L}}$, while the common branch $\mathbf{C}_{\mathbf{L}}$ is pushed toward layer invariance. This supports the assumption that $\mathbf{P}_{\mathbf{L}}^\star$ behaves as a
(near) minimal sufficient statistic for $\mathbf{L}$ and therefore becomes a measurable function of 
%the layer-specific latent variable 
$\mathbf{V}_{\mathbf{L}}$, while $\mathbf{C}_{\mathbf{L}}$ becomes a measurable function of $\mathbf{U}$.

\end{proof}

\section{Synthetic data generation}
Table~\ref{tab:syn_summary} summarizes the key configuration parameters of the synthetic datasets used in our experiments. Here, 
$M$ denotes the number of nodes, $N$ the number of layers, $N_\text{aug}$ the number of augmentations per layer, $r$ the node sampling ratio in graph augmentation, and $\sigma$ the standard deviation of the Gaussian augmentation noise. Learning rates are reported as a pair: the first corresponds to $\phi$ and $\psi$, and the second to the remaining network parameters.
All synthetic datasets share the same architecture, embedding dimension 
$d=8$, and Adam optimizer, with weight decay 
$10^{-4}$ applied to $\phi$ and $\psi$. Feature dropout is set to $0.1$.

To evaluate \textbf{Syn1} embeddings, we train a linear classifier for node classification over $50$ epochs with a learning rate $0.3$.

To assess disentanglement in \textbf{Syn2}, we apply K-means ($K=3$) to the common and private embeddings and evaluate their performance over $50$ runs using ARI and NMI. 

In \textbf{Syn3}, the scale parameter $t$ used in localized wavelet atoms is set to $\frac{20}{\lambda_{\max}}$, where $\lambda_{\max}$ is the spectral radius of the graph Laplacian estimated via power iteration. The parameter $t$ controls the spatial extent of the atoms, with smaller values producing more localized, high-frequency patterns. Normalizing by $\lambda_{\max}$ makes the scale independent of the Laplacian spectrum. For evaluation, we repeat K-means over $50$ runs and report the mean and standard deviation of ARI and NMI.

In \textbf{Syn4}, we consider three dynamical classes, and each class is instantiated using two distinct nonlinear dynamical systems, yielding six dynamics in total. Details of these dynamics are listed in Table \ref{tab:dynamics}, where $X^\ell_i(t)$ denotes the feature (state) of node $i$ at layer $\ell$ at time $t$, $A_{ij}$ is the $(i,j)$-th entry of the binary adjacency matrix $A$. We rescale $A$ by its spectral radius to obtain $\tilde{A}$ and replace $A$ with $\beta \tilde{A}$ in Table~\ref{tab:dynamics} to control coupling strength and ensure numerical stability, where $\beta$ is a class-dependent scaling parameter (see below). All systems are simulated using fixed-step forward Euler integration for $T$ steps with step size $\Delta t$. For degree-driven systems, we use $T=250$, $\Delta t=0.04$, and $\beta=1.0$; for homogeneous systems, $T=300$, $\Delta t=0.04$, and $\beta=1.2$; and for degree-avert systems, $T=100$, $\Delta t=0.02$, and $\beta=0.2$. Node states $X_0 \in \mathbb{R}^{M\times 1}$ are initialized independently for each layer using dynamic-specific random distributions: Gaussian $\mathcal{N}(0,0.5^2)$ for population dynamics, uniform $[0,0.5]$ for regulatory (degree-driven), uniform $[0,0.4]$ for epidemic, $\mathrm{Beta}(2,2)$ for biochemical, and uniform $[0,0.6]$ for mutualistic and regulatory (degree-avert) dynamics. 
Finally, to evaluate, a graph-level classifier is trained for $400$ epochs with a learning rate of $0.1$. As an additional analysis, we apply t-SNE to the private and common graph embeddings and visualize them across the three- and six-class settings to examine their behavior. The resulting plots are shown in Fig. \ref{figsyn4}.  For the common embeddings (Fig. \ref{figsyn4}(a)), points from different classes are mixed, indicating that class-specific information is not encoded, as intended. In contrast, the private embeddings for layer 1 and layer 2 (Fig. \ref{figsyn4}(b), \ref{figsyn4}(c)) form three well-separated clusters, each corresponding to a dynamical class, showing that they capture layer-specific dynamics. Moreover, the private graph embeddings for the six subclasses (Fig. \ref{figsyn4}(d)) cluster into six distinct groups, indicating that the representations preserve finer-grained subclass distinctions.

\begin{table*}[t]
\centering
\caption{Summary of synthetic datasets and experimental configurations.}
\label{tab:syn_summary}
\small
\setlength{\tabcolsep}{2.5pt}
\renewcommand{\arraystretch}{0.8}
\begin{tabular}{lcccccccccc}
\toprule
Dataset 
& $M$ 
& $N$ 
& Task 
& \#Clusters 
& $N_{\text{aug}}$ 
& $r$ 
& $\sigma$ 
& Epochs 
& Learning rates 
& Loss weights $(\mathcal{L}_{\text{match}}, \mathcal{L}_{\text{self-sup}},\mathcal{L}_{\text{causal}})$ \\
\midrule

Syn1 
& $100$ 
& $3$ 
& Node classification 
& $K=3$ 
& $5$ 
& $0.6$ 
& $0.1$ 
& $140$ 
& $(10^{-2},\,10^{-4})$ 
& $(0.9,\,1.5,\,3.4)$ \\

Syn2 
& $1000$ 
& $3$ 
& Node clustering 
& $K=3$ 
& $20$ 
& $0.6$ 
& $0.1$ 
& $200$ 
& $(10^{-2},\,10^{-4})$ 
& $(1, \,0.5,\,0.5)$ \\

Syn3 
& $600$ 
& $2$ 
& Node clustering 
& $3$ centers/layer 
& $20$ 
& $0.6$ 
& $0.1$ 
& $200$ 
& $(10^{-2},\,10^{-4})$ 
& $(1, \, 0.05,\,0.05)$ \\

Syn4 
& $100$ 
& $2$ 
& Graph classification 
& $3$ classes (or $6$ subclasses) 
& $20$ 
& $0.6$ 
& $0.1$ 
& $200$ 
& $(10^{-2},\,10^{-4})$ 
& $(1, \, 0.5,\,0.5)$ \\

\bottomrule
\end{tabular}
\end{table*}

\newcolumntype{C}[1]{>{\centering\arraybackslash}p{#1}}
\begin{table}[!!!!!!!!!!!!!!!!!!!!!t]
\centering
\Large
%\footnotesize
\caption{Summary of dynamical systems used to generate layer-specific signals in Syn4. 
%Each dynamic governs the temporal evolution of node states.
}
\label{tab:dynamics}
\setlength{\tabcolsep}{2pt}
\renewcommand{\arraystretch}{1.1}
\resizebox{\columnwidth}{!}{
\begin{tabular}{l C{13cm}}
\toprule
\textbf{Dynamics} & \textbf{Equation} \\
\midrule

Population dynamics &
$\displaystyle
\frac{dX_i^\ell(t)}{dt}
= -\bigl(X_i^\ell(t)\bigr)^3
+ \sum_{j=1}^{M} A_{ij}\,\bigl(X_j^\ell(t)\bigr)^2
$ \\

Regulatory dynamics (degree-driven) &
$\displaystyle
\frac{dX_i^\ell(t)}{dt}
= -X_i^\ell(t)
+ \sum_{j=1}^{M} A_{ij}\,
\frac{\bigl(X_j^\ell(t)\bigr)^{1/3}}{1+\bigl(X_j^\ell(t)\bigr)^{1/3}}
$ \\

Epidemic dynamics &
$\displaystyle
\frac{dX_i^\ell(t)}{dt}
= -X_i^\ell(t)
+ \sum_{j=1}^{M} A_{ij}\,\bigl(1-X_i^\ell(t)\bigr)\,X_j^\ell(t)
$ \\

Biochemical dynamics &
$\displaystyle
\frac{dX_i^\ell(t)}{dt}
= 1 - X_i^\ell(t)
- \sum_{j=1}^{M} A_{ij}\,X_i^\ell(t)\,X_j^\ell(t)
$ \\

Mutualistic dynamics &
$\displaystyle
\frac{dX_i^\ell(t)}{dt}
= X_i^\ell(t)\bigl(1-X_i^\ell(t)\bigr)
+ \sum_{j=1}^{M} A_{ij}\,X_i^\ell(t)\,
\frac{\bigl(X_j^\ell(t)\bigr)^2}{1+\bigl(X_j^\ell(t)\bigr)^2}
$ \\

Regulatory dynamics (degree-avert)&
$\displaystyle
\frac{dX^\ell_i(t)}{dt} = -X^\ell_i(t) + \sum_{j=1}^{M} A_{ij}\,\frac{(X^\ell_j(t))^{2}}{1 + (X^\ell_j(t))^{2}}
$ \\
\bottomrule
\end{tabular}
}
\end{table}

\begin{figure*}[!t]
\centering

\begin{minipage}[t]{0.24\textwidth}
\centering
\includegraphics[width=\linewidth]{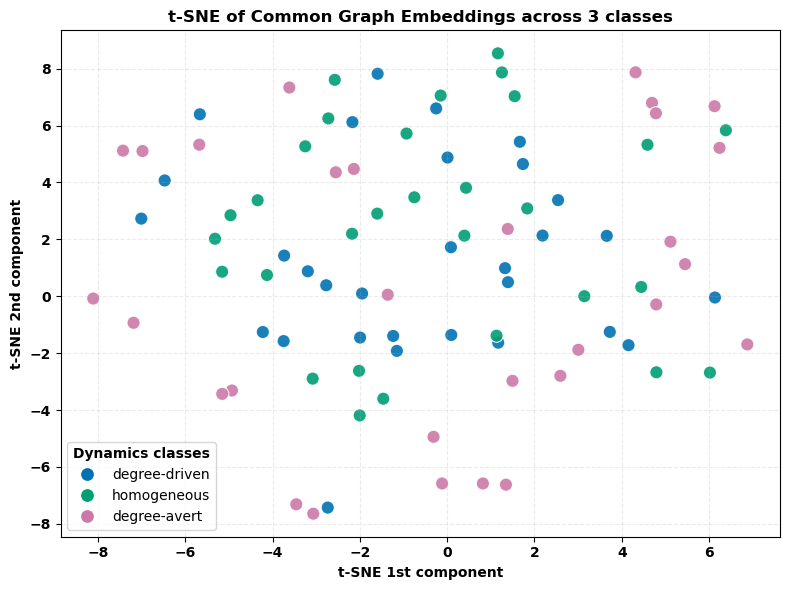}\\[-0.5ex]
{\footnotesize (a) t-SNE plot of common graph embeddings across $3$ classes}
\end{minipage}\hfill
\begin{minipage}[t]{0.24\textwidth}
\centering
\includegraphics[width=\linewidth]{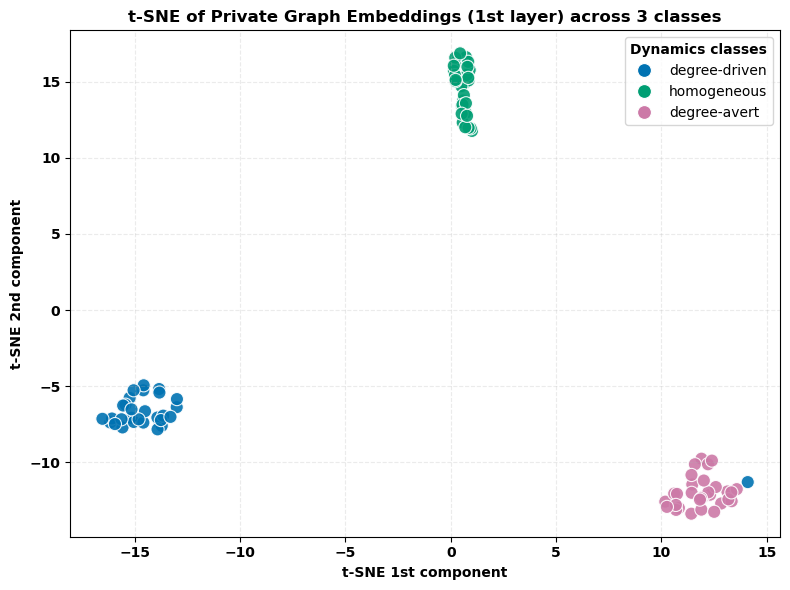}\\[-0.5ex]
{\footnotesize (b) t-SNE plot of private graph embeddings of the 1st layer %across $3$ classes
}
\end{minipage}\hfill
\begin{minipage}[t]{0.24\textwidth}
\centering
\includegraphics[width=\linewidth]{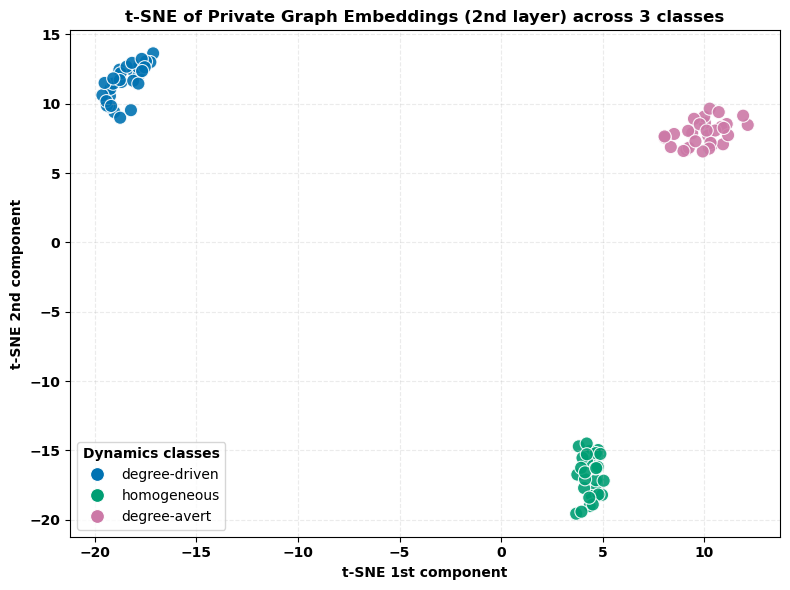}\\[-0.5ex]
{\footnotesize (c) t-SNE plot of private graph embeddings of the 2nd layer %across $3$ classes
}
\end{minipage}\hfill
\begin{minipage}[t]{0.24\textwidth}
\centering
\includegraphics[width=\linewidth]{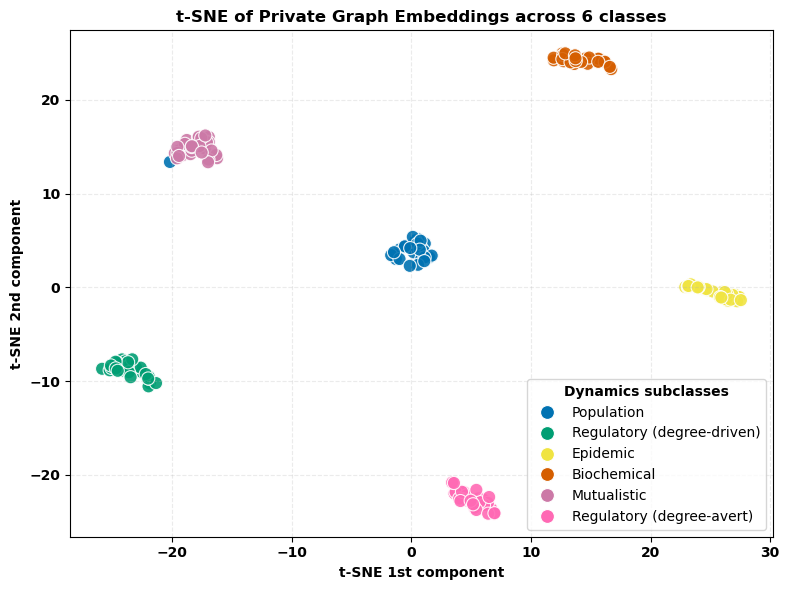}\\[-0.5ex]
{\footnotesize (d) t-SNE plot of private graph embeddings across $6$ subclasses}
\end{minipage}

\caption{t-SNE visualizations of common and private graph embeddings across the $3$ dynamical classes and $6$ subclasses.}
\label{figsyn4}

\end{figure*}

\section{Real-world datasets}
Table~\ref{tab:real_summary} summarizes the key characteristics and training configurations for all real-world multiplex datasets used in our experiments.
Learning rates (lr) and weight decays (wd) are reported as tuples: the first value is used for $\phi$ and $\psi$, and the second for the remaining network parameters. All experiments use the Adam optimizer, and the dimensions of both common and private embeddings are fixed to $8$. For all datasets except HCP, node features are row-normalized before training. Additionally, downstream node- and graph-level tasks are evaluated using a linear classifier trained for $400$ epochs with a learning rate of $0.1$.

\begin{table*}[t]
\centering
\caption{Summary of real-world multiplex datasets and experimental configurations.}
\label{tab:real_summary}
\small
\setlength{\tabcolsep}{3pt}
\renewcommand{\arraystretch}{0.8}
\begin{tabular}{lccccc}
\toprule
\textbf{Parameter} & \textbf{HCP} & \textbf{DBLP} & \textbf{IMDB} & \textbf{ACM} & \textbf{Freebase} \\
\midrule
$M$ & 360 & 7,907 & 3,550 & 3,025 & 3,492 \\
$N$ & 2 & 3 & 2 & 2 & 3 \\
Downstream task & Graph classification & Node classification & Node classification & Node classification & Node classification \\
Number of classes & 3 & 4 & 3 & 3 & 3 \\
$N_{\mathrm{aug}}$ & 20 & 20 & 20 & 20 & 30 \\
$r$ & 0.6 & 0.6 & 0.6 & 0.6 & 0.6 \\
$\sigma$ & 0.1 & 0.1 & 0.1 & 0.1 & 0.1 \\
Epochs & 200 & 400 & 400 & 400 & 400 \\
Learning rates & $(10^{-2},10^{-2})$ & $(10^{-2},10^{-4})$ & $(10^{-2},10^{-4})$ & $(10^{-2},10^{-4})$ & $(10^{-2},10^{-4})$ \\
Loss weights $(\mathcal{L}_{\text{match}}, \mathcal{L}_{\text{sup}},\mathcal{L}_{\text{causal}})$ & $(1, 0.01,0.01)$ & $(1, 0.05,0.05)$ & $(1, 0.1,0.01)$ & $(1, 0.5,0.5)$ & $(1, 0.1,0.01)$ \\
Feature dropout probability & 0 & 0.1 & 0.1 & 0.1 & 0.1 \\
Weight decay & $(10^{-4},0)$ & $(10^{-4},0)$ & $(10^{-4},0)$ & $(10^{-4},0)$ & $(10^{-4},0)$ \\
Node feature dimension & 2 & 2{,}000 & 2{,}000 & 1{,}870 & 3{,}492 \\

\bottomrule
\end{tabular}
\end{table*}

\textbf{DBLP \cite{Jing_2021}.} 
Nodes correspond to research papers, and each layer encodes a different relation among them. Papers are represented by bag-of-words features derived from their abstracts and belong to four research areas: Data Mining, Artificial Intelligence, Computer Vision, and Natural Language Processing. The multiplex graph has three layers corresponding to the Paper–Author–Paper, paper–paper reference, and Paper–Author–Term–Author–Paper relations.

\textbf{IMDB \cite{Jing_2021}.} Nodes correspond to movies, and each layer captures a different semantic relation among the same set of nodes. Each movie is associated with a bag-of-words feature vector derived from its plot. The multiplex graph has $2$ layers corresponding to the Movie–Actor–Movie and Movie–Director–Movie relations. Movies are categorized into three genre classes: Action, Comedy, and Drama.

\textbf{ACM \cite{Jing_2021}.} Nodes correspond to research papers, and each layer encodes a different paper–paper relation. Each node is associated with a bag-of-words feature vector derived from the paper abstract, and the task is to classify papers into research-area classes (database, wireless communication, and data mining). The multiplex graph has $2$ layers corresponding to the Paper–Author–Paper and Paper–Field/Subject–Paper meta-path graphs.

\textbf{Freebase \cite{pmlr-v202-mo23a}.} Nodes correspond to movies, and each layer captures a different relation between movies via meta-path–induced graphs. As no node attributes are available, we use one-hot node features. The multiplex graph has $3$ layers corresponding to the Movie–Actor–Movie, Movie–Director–Movie, and Movie–Writer–Movie relations.

\textbf{HCP \cite{VanEssen2013HCP}.}
We evaluate CaDeM on functional MRI data from the Human Connectome Project (HCP). 
The HCP dataset includes two resting-state fMRI acquisitions and seven task-based fMRI paradigms, designed to probe a broad range of cognitive functions. The time series used in our analysis are the parcel-level BOLD signals extracted from the HCP fMRI data. These signals are obtained by averaging voxel-wise BOLD activity within each of the 360 cortical regions defined by the Glasser multi-modal parcellation (MMP) atlas.
For each subject and task, we construct an FC matrix by computing the Pearson correlation between parcel-level BOLD time series. We form a $2$ layer multiplex graph by splitting FC into co-activation and anti-activation layers. Each layer is binarized via thresholding; we then 
eliminate isolated nodes by connecting each isolate to its maximum-weight neighbor.
For node features, we use the skewness and kurtosis of the BOLD time series.

\section{Computational Complexity Analysis}

In our framework, each layer is processed by two separate one-layer GCN encoders. For a sparse graph, each encoder has computational complexity $\mathcal{O}(|E^\ell|H + MFH)$, where $\ell$ denotes the layer index, $H$ is the hidden dimension of GCN and $F$ the input feature dimension; the term $|E^\ell|H$ arises from sparse neighborhood aggregation and $MFH$ from feature transformation.
Across all $N$ layers, the encoder cost per epoch is $\mathcal{O}\left(
\sum_{\ell=1}^{N}
\left(
|E^\ell| H + M F H
\right)
\right)$.
After graph propagation, linear projections map hidden representations to $d$-dimensional embeddings, adding $\mathcal{O}(MHd)$ complexity. Since typically $d \ll F$, this cost is dominated by the GCN feature transformation term $\mathcal{O}(MFH)$, yielding an overall encoder complexity of
$\mathcal{O}\left(
\sum_{\ell=1}^{N}
\left(
|E^\ell| H + M F H
\right)
\right)$.
%\end{equation}

The SVD operation in $\mathcal{L}_\text{matching}$ has complexity $\mathcal{O}(Md^2)$. Since $d$ is often small, this cost is negligible compared to the GCN propagation cost.

The computational cost of $\mathcal{L}_\text{self-supervised}$ over $N'$ augmented graph embeddings scales as
$\mathcal{O}\left(
N' (M d + d N)
\right)$,
where $Md$ arises from global pooling over $M$ node embeddings of dimension $d$, and $dN$ corresponds to $\phi$ mapping $d$-dimensional graph representations to $N$ output logits.

For $\mathcal{L}_{\mathrm{causal}}$, we approximate the expectation over the common embeddings by randomly sampling a limited number $K$ of pairings per private embedding. This Monte Carlo sampling strategy \cite{rubinstein2017simulation}, commonly used in contrastive and self-supervised learning, is sufficient to enforce the desired invariance. Each sampled pairing incurs pooling and linear classification costs, yielding an effective $\mathcal{L}_{\text{causal}}$ complexity of
$\mathcal{O}\left(
N' \cdot K \cdot (M d + d N)
\right)$.
Because $K$ is a constant independent of $N'$, the complexity scales linearly in $N'$. 

The costs of $\mathcal{L}_{\text{self-supervised}}$ and $\mathcal{L}_{\text{causal}}$ are lower-order than the encoder complexity, as they operate in the low-dimensional embedding space $d$, whereas the encoder performs sparse message passing in the higher-dimensional space $H$ and scales with the number of edges $|E^\ell|$. Moreover, $N'$ is small, and $K$ is fixed and independent of $N'$. 

As a result, the total training complexity per epoch is:
$
\mathcal{O}\left(
\sum_{\ell=1}^{N}
\left(
|E^\ell| H + M F H
\right)
\right),
$
and all additional operations are lower-order terms. In practice, for most real-world graph learning tasks, either $|E^\ell| \gg M$ or $F$ is moderate, and the proposed framework does not incur any additional asymptotic computational overhead beyond that of standard sparse GNN message passing. Consequently, CaDeM preserves the scalability of standard sparse GNN models while introducing only lightweight overhead for disentangled representation learning.

\section{Effect of Graph Augmentation}

Table \ref{graph_aug} reports the performance of our framework with and without graph augmentation across 4 real-world datasets and Syn1 for the node classification task. The results consistently demonstrate that augmentation improves both Macro-F1 and Micro-F1 scores across all datasets. These results demonstrate the effectiveness of the proposed graph augmentation strategy. Since $\mathcal{L}_{\text{self-supervised}}$ and $\mathcal{L}_{\text{causal}}$ approximate expectations over common and private embeddings, multiple samples from the same underlying distribution are required. Graph augmentation generates stochastic views of each layer while preserving semantic structure, enabling more reliable estimation of these expectations. It also stabilizes optimization by increasing the effective sample size used in the loss, reducing gradient variance, and producing smoother training.

\begin{table}[t]
\centering
\caption{Node classification performance of CaDeM with and without graph augmentation.}
\resizebox{\columnwidth}{!}{
\begin{tabular}{lcccc}
\toprule
& \multicolumn{2}{c}{\textbf{Without augmentation}} 
& \multicolumn{2}{c}{\textbf{With augmentation}} \\
\cmidrule(lr){2-3} \cmidrule(lr){4-5}
\textbf{Dataset} & \textbf{Macro F1} & \textbf{Micro F1} 
& \textbf{Macro F1} & \textbf{Micro F1} \\
\midrule
IMDB & $0.6571 \pm 0.0172$ & $0.6588 \pm 0.0159$ 
     & $\mathbf{0.6667 \pm 0.0158}$ & $\mathbf{0.6682 \pm 0.0155}$ \\

ACM & $0.9173 \pm 0.0100$ & $0.9167 \pm 0.0100$ 
    & $\mathbf{0.9287 \pm 0.0084}$ & $\mathbf{0.9283 \pm 0.0085}$ \\

DBLP & $0.8339 \pm 0.0158$ & $0.8255 \pm 0.0143$ 
     & $\mathbf{0.8497 \pm 0.0111}$ & $\mathbf{0.8414 \pm 0.0108}$ \\

Freebase & $0.6122 \pm 0.0126$ & $0.6766 \pm 0.0098$ 
         & $\mathbf{0.6224 \pm 0.0121}$ & $\mathbf{0.6887 \pm 0.0137}$ \\

First synthetic data & $0.7843 \pm 0.0654$ & $0.7900 \pm 0.0678$ 
                     & $\mathbf{0.8178 \pm 0.0516}$ & $\mathbf{0.8200 \pm 0.0510}$ \\
\bottomrule
\end{tabular}
}
\label{graph_aug}

\end{table}

\end{document}